\pdfoutput=1
\documentclass{article}

\usepackage{colt11e}
\usepackage[round,comma]{natbib}
\bibliographystyle{plainnat}
\usepackage{algorithm,algorithmic}
\usepackage{amsmath, amssymb, multirow, paralist}
\newtheorem{thm}{Theorem}

\def \R {\mathbb{R}}

\def \E {\mathrm{E}}
\def \a {\mathbf{a}}

\def \v {\mathbf{v}}

\def \X {\mathcal{X}}

\def \Mh {\widehat{M}}

\def \u {\mathbf{u}}

\def \v {\mathbf{v}}

\def \R {\mathbb{R}}

\def \b {\mathbf{b}}

\def \vt {\widetilde{\v}}

\def \Hh {\widehat{H}}

\def \e {\mathbf{e}}

\def \Uh {\widehat{U}}
\def \uh {\widehat{\u}}
\def \Vh {\widehat{V}}
\def \vh {\widehat{\v}}

\def \vec {\mbox{vec}}
\def \k {\mathbf{k}}

\def \Rt {\mathcal{R}}

\def \ut {\widetilde{\u}}
\def \muh {\widehat{\mu}}

\title{CUR Algorithm with Incomplete Matrix Observation}
\author{Rong Jin$^{\dagger}$ \and Shenghuo Zhu$^{\ddagger}$ \\
$^{\dagger}$ Dept. of Computer Science and Engineering, Michigan State University, \texttt{\small rongjin@msu.edu} \\
$^{\ddagger}$  NEC Laboratories America, Inc., \texttt{\small
  zsh@nec-labs.com}
}
\begin{document}
\maketitle

\section{Introduction}

CUR matrix decomposition is a randomized algorithm that can efficiently compute the low rank approximation for a given rectangle matrix~\citep{drineas-2006-fast, drineas-2008-relative,mahoney-2009-cur}. Let $A\in \R^{n\times m}$ be the given matrix and $k$ be the target rank for approximation. CUR randomly samples $c = O(k\log k/\epsilon^2)$ columns and $r= O(k\log k/\epsilon^2)$ rows from $A$, according to their leverage scores, to form matrices $C$ and $R$, respectively. The approximated matrix $\widehat{A}$ is then computed as $CUR$, where $U = C^{\dagger} A R^{\dagger}$. It can be shown, that with a high probability,
\begin{eqnarray}
\|A - \widehat{A}\|_F \leq (2 + \epsilon) \|A - A_k\|_F \label{eqn:bound-1}
\end{eqnarray}
where $A_k$ is the best $k$-rank approximation of $A$. In case when the maximum of statistical leverage scores, which is also referred to as incoherence measure in matrix completion~\citep{candes-2010-power,recht-2011-simple,candes-2012-exact}, are small, CUR matrix decomposition can be simplified by uniformly sampling rows an columns from $A$. The simplified algorithm will have a relative error bound similar to that in (\ref{eqn:bound-1}) except that the sample sizes $c$ and $r$ should be increased by the incoherence measure. In this draft, we will focus on the situation with bounded incoherence measure where uniform sampling of columns and rows is in general sufficient.

One limitation with the existing CUR algorithms is that they require
an access to the full matrix $A$ for computing $U$. In this work, we
aim to alleviate this limitation. In particular, we assume that
besides having an access to randomly sampled $d$ rows and $d$ columns
from $A$, we only observe a subset of randomly sampled entries
$\Omega$ from $A$. Our goal is to develop a low rank approximation
algorithm, similar to CUR, based on (i) randomly sampled rows and
columns from $A$, and (ii) randomly sampled entries from $A$.

Compared to the standard matrix completion theory~\citep{candes-2010-power,recht-2011-simple,candes-2012-exact}, the key advantage of the proposed algorithm is its low sample complexity and high computational efficiency. In particular, unlike matrix completion that requires $O(rn\log^2n)$ number of observed entries, the proposed algorithm is able to perfectly recover the target matrix $A$ with only $O(rn\log n)$ number of observed entries (including the randomly sampled entries and entries in randomly sampled rows and columns). In addition, instead of having to solve an optimization problem involved trace norm regularization, the proposed algorithm only needs to solve a standard regression problem. Finally, unlike most matrix completion theories that hold only when the target matrix is of low rank, we show a strong guarantee for the proposed algorithm even when the target matrix $A$ is not low rank.

We finally note that a closely related algorithm, titled ``Low-rank Matrix and Tensor Completion via Adaptive Sampling'', was published recently~\citep{krishnamurthy-20130-low}. It is designed to recover a low rank matrix with randomly sampled rows and entries, which is different from the goal of this work (i.e. computing a low rank approximation for a target matrix $A$).

\section{Algorithm and Notation}
Let $M \in \R^{n\times m}$ be the target matrix, where $n \geq m$. To
approximate $M$, we first sample uniformly at random $d$ columns and
rows from $M$, denoted by $A = (\a_1, \ldots, \a_d) \in \R^{n\times
  d}$ and $B = (\b_1, \ldots, \b_d) \in \R^{m\times d}$, respectively,
where each $\a_i \in \R^n$ and $\b_j \in \R^m$ is a row and column of
$M$, respectively. Let $r$ be the target rank for approximation, with
$r \leq d$. Let $\Uh = (\uh_1, \ldots, \uh_r) \in \R^{n\times r}$ and
$\Vh = (\vh_1, \ldots, \vh_r) \in \R^{m\times r}$ be the first $r$
eigenvectors of $AA^{\top}$ and $BB^{\top}$, respectively. Besides $A$
and $B$, we furthermore sample, uniformly at random, entries from
matrix $M$. Let $\Omega$ include the indices of randomly sampled
entries. Our goal is to approximately recover the matrix $M$ using
$A$, $B$, and randomly sample entries in $\Omega$. To this end, we
will solve the following optimization problem
\begin{eqnarray}
\min\limits_{Z \in \R^{r\times r}} |\Rt_{\Omega}(M) - \Rt_{\Omega}(\Uh Z \Vh^{\top})|_F^2 \label{eqn:opt}
\end{eqnarray}
where $\Rt_{\Omega}:\R^{n\times m} \mapsto \R^{n\times m}$ is defined as
\[
[\Rt_{\Omega}(M)]_{i,j} = \left\{
\begin{array}{cc}
M_{i,j} & (i,j) \in \Omega \\
0       & \mbox{o. w.}
\end{array}
\right.
\]
Let $Z_*$ be an optimal solution to (\ref{eqn:opt}). The recovered matrix is given by $\Mh = \Uh Z_*\Vh^{\top}$.

The following notation will be used throughout the draft. We denote by $\sigma_i, i=1, \ldots, m$ the singular values of $M$ in ranked in the descending order, and by $\u_i$ and $\v_i$ be the corresponding left and right singular vectors. Define $U = (\u_1, \ldots, \u_m)$ and $V = (\v_1, \ldots, \v_m)$. Given $r \in [m]$, partition the SVD decomposition of $M$ as
\begin{equation}
    M = U \Sigma V^{\top} =
    \bordermatrix{~ &  r & n-r \cr ~ & U_1 & U_2 \cr} %
    \begin{pmatrix} \Sigma_1 &  \\ & \Sigma_2 \end{pmatrix}
    \begin{pmatrix}V_1^{\top} \\ V_2^{\top} \end{pmatrix} \label{eqn:partition}
\end{equation}
Let $\ut_i, i \in [n]$ be the $i$th column of $U^{\top}_1$ and $\vt_i, i \in [m]$ be the $i$th column of $V^{\top}_1$. Define the incoherence measure for $U_1$ and $V_1$ as
\[
\mu(r) = \max\left(\max\limits_{i \in [n]} \frac{n}{r}|\ut_i|^2, \max\limits_{i \in [m]} \frac{m}{r}|\vt_i|^2 \right)
\]
Similarly, we define the incoherence measure for matrices $\Uh$ and $\Vh$. Let $\uh'_i, i \in [n]$ be the $i$th column of $\Uh^{\top}$ and $\vh'_i, i \in [m]$ be the $i$th column of $\Vh^{\top}$. Define the incoherence measure for $\Uh$ and $\Vh$ as
\[
\muh = \max\left(\max\limits_{i \in [n]} \frac{n}{r}|\uh'_i|^2, \max\limits_{i \in [m]} \frac{m}{r}|\vh'_i|^2 \right)
\]
Define projection operators $P_{U} = UU^{\top}$, $P_V = VV^{\top}$, $P_{\Uh} = \Uh\Uh^{\top}$, and $P_{\Vh} = \Vh\Vh^{\top}$. We will use $\|\cdot\|_2$ for spectral norm of matrix, and $\|\cdot\|_F$ for the Frobenius norm of matrix.

\section{Supporting Theorems}
In this section, we present several theorems that are important to our analysis.
\begin{thm} \label{thm:1} (~\citep{halko-2011-finding}) Let $M$ be an
  $n\times m$ matrix with singular value decomposition $M = U\Sigma
  V^{\top}$, an a fixed $r > 0$. Choose a test matrix $\Omega \in
  \R^{m\times d}$ and construct sample matrix $Y = M
  \Omega$. Partition $M$ as in (\ref{eqn:partition}) and define
  $\Omega_1 = V_1^{\top} \Omega$ and $\Omega_2 =
  V_2^{\top}\Omega$. Assuming $\Omega_1$ has full row rank, the
  approximation error satisfies
\[
    \|M - P_Y(M)\|^2_2 \leq \|\Sigma_2\|^2_2 + \|\Sigma_2\Omega_2\Omega_1^{\dagger}\|_2^2
\]
where $P_Y(M)$ project column vectors in $M$ in the subspace spanned by the column vectors in $Y$.
\end{thm}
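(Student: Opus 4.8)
This is the deterministic approximation-error bound of~\citet{halko-2011-finding}, and the plan is to reduce the orthogonal projector $P_Y$ onto $\mathrm{range}(Y)$ to the projector onto the range of a cleverly chosen right multiple of $Y$, so that the estimate follows from only two elementary facts: an orthogonal projector has spectral norm at most one, and $\|AA^{\top}\|_2 = \|A\|_2^2$.

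First I would write $M = U\Sigma V^{\top}$ and use the partition (\ref{eqn:partition}) to obtain $Y = M\Omega = U\bigl(\begin{smallmatrix}\Sigma_1\Omega_1\\ \Sigma_2\Omega_2\end{smallmatrix}\bigr)$. The key monotonicity step is that for any conforming matrix $Z$ we have $\mathrm{range}(YZ)\subseteq\mathrm{range}(Y)$, hence $I - P_Y \preceq I - P_{YZ}$ in the positive semidefinite order; comparing the top eigenvalues of $M^{\top}(I-P_Y)M$ and $M^{\top}(I-P_{YZ})M$ then gives $\|M - P_Y(M)\|_2 \le \|(I-P_{YZ})M\|_2$, so it suffices to control the right-hand side for a convenient $Z$. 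I would take $Z = \Omega_1^{\dagger}\Sigma_1^{-1}$; since $\Omega_1$ has full row rank, $\Omega_1\Omega_1^{\dagger} = I$, and the product collapses to $W := YZ = U\bigl(\begin{smallmatrix}I\\ F\end{smallmatrix}\bigr)$ with $F := \Sigma_2\Omega_2\Omega_1^{\dagger}\Sigma_1^{-1}$; note $F\Sigma_1 = \Sigma_2\Omega_2\Omega_1^{\dagger}$ is exactly the matrix appearing in the bound.

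For the final estimate of $\|(I-P_W)M\|_2$ I would avoid computing $P_W$ explicitly and instead exploit $(I-P_W)W = 0$: writing $W = U_1 + U_2 F$ this says $(I-P_W)U_1 = -(I-P_W)U_2 F$, and substituting into $M = U_1\Sigma_1 V_1^{\top} + U_2\Sigma_2 V_2^{\top}$ collapses everything to $(I-P_W)M = (I-P_W)U_2\bigl(\Sigma_2 V_2^{\top} - F\Sigma_1 V_1^{\top}\bigr)$. Submultiplicativity then finishes it: the left factor has norm at most $\|I-P_W\|_2\|U_2\|_2 \le 1$, while the right factor equals $\bigl(\begin{smallmatrix}-F\Sigma_1 & \Sigma_2\end{smallmatrix}\bigr)V^{\top}$ and hence, $V$ being orthogonal, has squared spectral norm $\|F\Sigma_1^2 F^{\top} + \Sigma_2^2\|_2 \le \|F\Sigma_1\|_2^2 + \|\Sigma_2\|_2^2 = \|\Sigma_2\Omega_2\Omega_1^{\dagger}\|_2^2 + \|\Sigma_2\|_2^2$.

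The one place needing genuine care, which I would treat as the main (if minor) obstacle, is the degenerate case $\sigma_r = 0$, where $\Sigma_1^{-1}$ and thus $Z$ and $F$ are undefined: there $\Sigma_2 = 0$ so the stated bound is $0$, and one must argue separately that $\mathrm{range}(Y) = \mathrm{range}(M)$ --- which again uses that $\Omega_1$ has full row rank --- so that the left-hand side also vanishes. Apart from this, and from making sure the comparison $P_{YZ}\preceq P_Y$ is oriented so the resulting inequality is the useful one, the argument is routine block-matrix manipulation.
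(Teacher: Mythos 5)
Your proof is correct. The paper offers no proof of this theorem at all --- it is imported verbatim from \citet{halko-2011-finding} --- and your argument is essentially the standard one from that reference (their Theorem 9.1 via Proposition 8.2): reduce $P_Y$ to $P_{YZ}$ with $Z=\Omega_1^{\dagger}\Sigma_1^{-1}$ using the range-monotonicity of projectors, collapse $(I-P_W)M$ using $(I-P_W)W=0$, and finish with the block computation $\|F\Sigma_1^2F^{\top}+\Sigma_2^2\|_2\leq\|\Sigma_2\Omega_2\Omega_1^{\dagger}\|_2^2+\|\Sigma_2\|_2^2$; your separate treatment of the degenerate case $\sigma_r=0$, where $\Omega_1\Omega_1^{\dagger}=I$ gives $\mathrm{range}(M)\subseteq\mathrm{range}(Y)$ directly, is also the right fix for the only place the main argument breaks down.
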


\begin{thm} \label{lemm:1} (~\citep{tropp-2011-improved})
Let $\X$ be a finite set of PSD matrices with dimension $k$, and suppose that
\[
    \max_{X \in \X} \lambda_1(X) \leq B.
\]
Sample $\{X_1, \ldots, X_{\ell}\}$ uniformly at random from $\X$ without replacement. Compute
\[
    \mu_{\max} = \ell \lambda_{\max}(\E[X_1]), \quad     \mu_{\min} = \ell \lambda_{\min}(\E[X_1])
\]
Then
\begin{eqnarray*}
& & \Pr\left\{\lambda_{\max}\left(\sum_{i=1}^{\ell} X_i\right) \geq (1 + \delta) \mu_{\max} \right\} \leq k \left[ \frac{e^{\delta}}{(1 + \delta)^{1 + \delta}}\right]^{\mu_{\max}/B} \\
& & \Pr\left\{\lambda_{\min}\left(\sum_{i=1}^{\ell} X_i\right) \leq (1
  - \delta) \mu_{\min} \right\} \leq k \left[ \frac{e^{-\delta}}{(1 - \delta)^{1 - \delta}}\right]^{\mu_{\min}/B}
\end{eqnarray*}
\end{thm}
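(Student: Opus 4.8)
The plan is to establish both tail bounds via the matrix Laplace transform (Markov) method, combined with a reduction from sampling without replacement to sampling with replacement. Write $S = \sum_{i=1}^{\ell} X_i$ for the without-replacement sum. Since $S$ is PSD, for any $\theta > 0$ we have the Chernoff-type starting point
\[
\Pr\left\{\lambda_{\max}(S) \geq (1+\delta)\mu_{\max}\right\} \leq e^{-\theta(1+\delta)\mu_{\max}}\, \E\,\tr\exp(\theta S),
\]
using $\lambda_{\max}(e^{\theta S}) \leq \tr e^{\theta S}$ for PSD matrices. It therefore suffices to control the matrix moment generating function $\E\,\tr\exp(\theta S)$ and then optimize over $\theta$; the lower-tail bound is handled symmetrically via $e^{-\theta S}$.

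First I would replace the without-replacement sample by a with-replacement (i.i.d.\ uniform) sample. The map $H \mapsto \tr\exp(\theta H)$ is convex on Hermitian matrices, and a without-replacement sum can be realized as a conditional expectation, given a sufficient statistic, of a with-replacement sum taken over a uniformly random ordering; Jensen's inequality then yields $\E\,\tr\exp(\theta S) \leq \E\,\tr\exp(\theta S')$ where $S' = \sum_{i=1}^{\ell} Y_i$ and the $Y_i$ are i.i.d.\ uniform on $\X$. This is the matrix form of Hoeffding's classical reduction, as used in~\citep{tropp-2011-improved}.

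For the i.i.d.\ sum I would invoke subadditivity of the matrix cumulant generating function (a consequence of Lieb's concavity theorem): $\E\,\tr\exp(\theta S') \leq \tr\exp\!\big(\sum_{i=1}^{\ell}\log\E e^{\theta Y_i}\big)$. Since $0 \preceq Y_i \preceq B I$, convexity of $x \mapsto e^{\theta x}$ on $[0,B]$ gives $\E e^{\theta Y_i} \preceq I + g(\theta)\,\E Y_i$ with $g(\theta) = (e^{\theta B}-1)/B$, hence $\log\E e^{\theta Y_i} \preceq g(\theta)\,\E Y_i$ by operator monotonicity of $\log$. Summing, using $\ell\,\E Y_1 = \ell\,\E X_1$ and $\lambda_{\max}(\ell\,\E X_1) = \mu_{\max}$, we obtain $\E\,\tr\exp(\theta S) \leq k\exp(g(\theta)\mu_{\max})$. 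Substituting back, $\Pr\{\lambda_{\max}(S) \geq (1+\delta)\mu_{\max}\} \leq k\exp\!\big(\mu_{\max}[g(\theta) - \theta(1+\delta)]\big)$, and the choice $\theta = B^{-1}\log(1+\delta)$ makes the bracket equal to $B^{-1}[\delta - (1+\delta)\log(1+\delta)]$, which is exactly the claimed upper-tail bound.

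The lower-tail bound for $\lambda_{\min}$ follows along the same lines: apply Markov to $e^{-\theta S}$ for $\theta > 0$, note that $H \mapsto \tr\exp(-\theta H)$ is still convex so the without-replacement-to-with-replacement reduction goes through verbatim, use the estimate $e^{-\theta x} \leq 1 - B^{-1}(1 - e^{-\theta B})x$ on $[0,B]$, and optimize over $\theta$. The main obstacle is not any single calculation but the two structural tools that must be imported: Lieb's concavity theorem (underlying the matrix-cgf subadditivity) and the permutation/Jensen argument that legitimizes passing from without- to with-replacement sampling; both are standard but neither is elementary, and in a paper of this type they would be cited from~\citep{tropp-2011-improved} rather than reproved. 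Everything after the reduction is the classical scalar Chernoff optimization carried out with $\tr\exp$ in place of the exponential.
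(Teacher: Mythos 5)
The paper does not prove this statement at all; it is imported verbatim from \citep{tropp-2011-improved} as a supporting tool. Your sketch --- the matrix Laplace transform bound $\Pr\{\lambda_{\max}(S)\geq a\}\leq e^{-\theta a}\,\E\,\tr e^{\theta S}$, the Hoeffding/Gross--Nesme convexity reduction from without-replacement to i.i.d.\ sampling, the Lieb-based subadditivity of the matrix cumulant generating function, the estimate $\log\E e^{\theta Y}\preceq B^{-1}(e^{\theta B}-1)\,\E Y$, and the optimization $\theta=B^{-1}\log(1+\delta)$ (with the symmetric treatment of the lower tail) --- is exactly the argument of that cited source, and every step checks out.
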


\begin{thm} \label{thm:perturbation}
Let $A = S^{\top}H S$ and $\tilde{A} = S^{\top}\tilde{H}S$ be two symmetric matrices of size $n\times n$. Let $\lambda_i, i \in [n]$ and $\tilde{\lambda}_i, i \in [n]$ be the eigenvalues of $A$ and $\tilde{A}$, respectively, ranked in descending order. Let $U_1, \tilde{U}_1 \in \R^{n\times r}$ include the first $r$ eigenvectors of $A$ and $\tilde{A}$, respectively. Let $\|\cdot\|$ be any invariant norm. Define
\begin{eqnarray*}
\Delta_{\lambda} & = & \min\left(\sqrt{2}\left(1 - \frac{\lambda_{r+1}}{\lambda_r}\right), \frac{1}{\sqrt{2}}\right) \\
\Delta_H         & = & \frac{\|H^{-1}\|_2 \|H - \tilde{H}\|}{\sqrt{1 - \|H^{-1}\|_2\|H - \tilde{H}\|_2}}
\end{eqnarray*}
If $\Delta_{\lambda} \geq \Delta_H/2$, we have
\[
\|\sin\Theta(U_1, \tilde{U}_1)\| \leq \frac{\Delta_H}{\Delta_{\lambda} - \Delta_H/2}\left(1 + \frac{\Delta_H\Delta_{\lambda}}{16} \right)
\]
\end{thm}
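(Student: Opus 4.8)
The plan is to reduce this to a classical Davis--Kahan type $\sin\Theta$ bound, but for the *generalized* eigenvalue structure hidden in $A = S^\top H S$ and $\tilde A = S^\top \tilde H S$. The subtlety is that the perturbation $\|A - \tilde A\|$ is not directly controlled by $\|H - \tilde H\|$ alone — it also involves $S$, which can be ill-conditioned — so we must instead work in a coordinate system where the relevant inner product is defined by $H$ (or by $S^\top S$) rather than the Euclidean one. First I would write $H - \tilde H = H^{1/2}(I - H^{-1/2}\tilde H H^{-1/2})H^{1/2}$ (assuming $H \succ 0$; this is implicit since $\|H^{-1}\|_2$ appears), so that controlling the eigenvector perturbation of $A$ versus $\tilde A$ becomes controlling the eigenvector perturbation of $H^{-1/2}\tilde H H^{-1/2}$ away from the identity, whose deviation from $I$ in operator norm is at most $\|H^{-1}\|_2\|H - \tilde H\|_2 < 1$. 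The quantity $\Delta_H$ is exactly the natural bound one gets for $\| (I - H^{-1/2}\tilde H H^{-1/2})^{1/2}\|$-type corrections after normalizing, which explains the $\sqrt{1 - \|H^{-1}\|_2\|H-\tilde H\|_2}$ in the denominator.

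Next, I would invoke a $\sin\Theta$ theorem for the pair of symmetric matrices $A, \tilde A$ in the form that bounds $\|\sin\Theta(U_1,\tilde U_1)\|$ by $\|E U_1\|$ divided by the eigenvalue gap, where $E = \tilde A - A$ and the gap is $\lambda_r - \lambda_{r+1}$ (or better, a relative gap, which is where the factor $(1 - \lambda_{r+1}/\lambda_r)$ enters — note $A$ and $\tilde A$ share the factor $S$, so relative perturbation theory applies and gives a bound in terms of $\lambda_r$ ratios rather than absolute gaps). The term $\sqrt{2}(1 - \lambda_{r+1}/\lambda_r)$ capped at $1/\sqrt2$ is precisely the effective relative gap $\Delta_\lambda$ appearing in relative Davis--Kahan bounds (e.g. the $\sin 2\Theta$ or the Li / Ipsen relative-perturbation versions). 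Combining: the raw first-order bound is of order $\Delta_H/\Delta_\lambda$; the extra multiplicative factor $(1 + \Delta_H\Delta_\lambda/16)$ and the shift $-\Delta_H/2$ in the denominator come from iterating/bootstrapping the first-order estimate to second order (the standard trick: plug the first-order $\sin\Theta$ bound back into the quadratic remainder term of the eigenprojector expansion), which is valid precisely under the hypothesis $\Delta_\lambda \ge \Delta_H/2$ that keeps the denominator positive and the geometric series convergent.

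The main obstacle I anticipate is getting the relative perturbation machinery to apply cleanly despite $S$ being arbitrary (not necessarily square or invertible) — one has to argue that the eigenvectors of $S^\top H S$ in the range of $S^\top$ are what matter, pass to the quotient/compression onto that range, and verify that $\|H - \tilde H\|$ in the chosen invariant norm controls the perturbation of the compressed operator in that same norm. Once the problem is correctly posed as a relative perturbation of $H^{-1/2}\tilde H H^{-1/2}$ against $I$ with gap parameter $\Delta_\lambda$, the rest is a careful but routine second-order expansion of the spectral projector; I would structure the write-up as (i) normalization reducing to the $I$-baseline, (ii) first-order $\sin\Theta$ via a Sylvester-equation / resolvent argument, (iii) bootstrap to the stated second-order form under $\Delta_\lambda \ge \Delta_H/2$.
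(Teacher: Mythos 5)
First, a point of reference: the paper does not actually prove Theorem~\ref{thm:perturbation} --- it states that the result ``follows directly from Theorem 4.4 and discussion in Section 5'' of~\citep{li-1999-relative} and skips the proof, so there is no in-paper argument to compare against line by line. Your outline does correctly identify the framework in which that cited result lives: because $A$ and $\tilde{A}$ are graded by the \emph{same} factor $S$, one writes $\tilde{H} = H^{1/2}(I+E)H^{1/2}$ with $E = H^{-1/2}(\tilde{H}-H)H^{-1/2}$, so that $\tilde{A} = \tilde{S}^{\top}(I+E)\tilde{S}$ and $A = \tilde{S}^{\top}\tilde{S}$ with $\tilde{S}=H^{1/2}S$; the perturbation is then multiplicative, $\|E\|\leq\|H^{-1}\|_2\|H-\tilde{H}\|$, and the $\sqrt{1-\|H^{-1}\|_2\|H-\tilde{H}\|_2}$ in $\Delta_H$ arises from $\|I-(I+E)^{-1/2}\|$-type corrections. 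That reduction, and your observation that $S$ cancels so that only relative (ratio) eigenvalue gaps survive, is the right structural insight and matches the source the paper leans on.

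As a proof, however, the proposal has a genuine gap: everything that makes the theorem quantitatively what it is --- the precise relative-gap metric yielding $\Delta_\lambda = \min\left(\sqrt{2}\left(1-\lambda_{r+1}/\lambda_r\right), 1/\sqrt{2}\right)$, the shifted denominator $\Delta_\lambda - \Delta_H/2$, and the correction factor $1+\Delta_H\Delta_\lambda/16$ --- is deferred to a ``routine second-order expansion'' and ``bootstrapping.'' These constants are not produced by a generic geometric-series iteration of a first-order Davis--Kahan bound; they come from a specific Sylvester-equation/quadratic-inequality argument in relative perturbation theory, and an unguided bootstrap would at best yield a bound of the same shape with different, unverified constants, which is not the stated theorem. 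In particular, you never connect the ratio $1-\lambda_{r+1}/\lambda_r$ to the chordal-type relative distance that produces the $\sqrt{2}$ and the cap at $1/\sqrt{2}$, and you never exhibit where the $1/16$ comes from. To close the gap you would either have to carry out steps (ii)--(iii) of your plan in full quantitative detail, or do what the paper does and invoke Theorem 4.4 of~\citep{li-1999-relative} explicitly after checking that its hypotheses (positivity of $H$, $\|H^{-1}\|_2\|H-\tilde{H}\|_2<1$, and the gap condition $\Delta_\lambda \geq \Delta_H/2$) are exactly the ones stated here.
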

Since the above theorem follows directly from Theorem 4.4 and discussion in Section 5 from~\citep{li-1999-relative}, we skip its proof.

\section{Recovering a Low Rank Matrix}

In this section, we discuss the recovery result when the rank of $M$ is no more than $r$. We will first provide the key results for our analysis, and then present detailed proof for the key theorems.

\subsection{Main Result}

Our analysis is divided into two steps. We will first show that $|M - P_{\Uh} M P_{\Vh}|_2^2$ is small, and then bound the strongly convexity of the objective function in (\ref{eqn:opt}). The following theorem shows that the difference between $M$ and $\Mh$, measured in spectral norm, is well bounded if $|M - P_{\Uh} M P_{\Vh}|_2^2$ is small and the objective function in (\ref{eqn:opt}) is strongly convex.

\begin{thm} \label{thm:combine}
Assume (i) $\|M - P_{\Uh} M P_{\Vh}\|_2^2 \leq \Delta$, and (ii) the strongly convexity of the objective function is no less than $|\Omega|\gamma$. Then
\[
\|M - \Mh\|^2_2 \leq 2\left(\Delta + \frac{\Delta}{\gamma}\right)
\]
\end{thm}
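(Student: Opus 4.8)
The plan is to compare the minimizer $Z_*$ of~(\ref{eqn:opt}) against the explicit candidate $\bar{Z} = \Uh^{\top} M \Vh$, which satisfies $\Uh \bar{Z} \Vh^{\top} = P_{\Uh} M P_{\Vh} =: \bar{M}$, and then to convert a Frobenius-norm bound on $Z_* - \bar{Z}$ into the claimed spectral-norm bound on $M - \Mh$ via the triangle inequality. Write $f(Z) = |\Rt_{\Omega}(M) - \Rt_{\Omega}(\Uh Z \Vh^{\top})|_F^2$ for the objective. The first step is to control $f$ at the candidate: by linearity of $\Rt_{\Omega}$, $f(\bar{Z}) = |\Rt_{\Omega}(M - \bar{M})|_F^2 = \sum_{(i,j) \in \Omega} (M_{ij} - \bar{M}_{ij})^2$, and since every entry of a matrix is bounded in absolute value by its spectral norm, $|M_{ij} - \bar{M}_{ij}| \le \|M - \bar{M}\|_2 \le \sqrt{\Delta}$ by assumption~(i); hence $f(\bar{Z}) \le |\Omega|\Delta$.

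Next I would invoke strong convexity. Since $f$ is a convex quadratic in $Z$, optimality of $Z_*$ gives $\nabla f(Z_*) = 0$, and assumption~(ii) then yields $f(\bar{Z}) \ge f(Z_*) + |\Omega|\gamma\,\|\bar{Z} - Z_*\|_F^2$ (up to the paper's normalization of the strong-convexity constant). Combining this with $f(Z_*) \le f(\bar{Z})$ and the previous bound gives $\|\bar{Z} - Z_*\|_F^2 \le f(\bar{Z})/(|\Omega|\gamma) \le \Delta/\gamma$. To finish, note that because $\Uh$ and $\Vh$ have orthonormal columns, $\|\bar{M} - \Mh\|_2 = \|\Uh(\bar{Z} - Z_*)\Vh^{\top}\|_2 = \|\bar{Z} - Z_*\|_2 \le \|\bar{Z} - Z_*\|_F \le \sqrt{\Delta/\gamma}$; then by the triangle inequality and assumption~(i), $\|M - \Mh\|_2 \le \|M - \bar{M}\|_2 + \|\bar{M} - \Mh\|_2 \le \sqrt{\Delta} + \sqrt{\Delta/\gamma}$, and squaring with $(a+b)^2 \le 2a^2 + 2b^2$ yields $\|M - \Mh\|_2^2 \le 2(\Delta + \Delta/\gamma)$.

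I do not expect a serious obstacle: the argument is short bookkeeping once $\Delta$ and $\gamma$ are in hand. The one point that must be gotten right is the entrywise-to-spectral inequality $\max_{i,j}|X_{ij}| \le \|X\|_2$ used in the first step, since it is exactly what makes $f(\bar{Z})$ grow linearly in $|\Omega|$ so that the $|\Omega|$ carried by the strong-convexity constant cancels; all of the genuine difficulty is deferred to the later theorems that certify $\|M - P_{\Uh} M P_{\Vh}\|_2^2 \le \Delta$ and that the objective in~(\ref{eqn:opt}) is $|\Omega|\gamma$-strongly convex.
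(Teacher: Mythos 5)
Your proposal is correct and follows essentially the same route as the paper's own proof: the same candidate $\bar{Z} = \Uh^{\top} M \Vh$, the same entrywise-to-spectral bound giving $f(\bar{Z}) \le |\Omega|\Delta$, the same strong-convexity step yielding $\|\bar{Z} - Z_*\|_F^2 \le \Delta/\gamma$, and the same final combination via $(a+b)^2 \le 2a^2 + 2b^2$. No substantive differences.
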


To utilize Theorem~\ref{thm:combine}, we need to bound $\Delta$ and $\gamma$, respectively, which are given in the following two theorems.

\begin{thm} \label{thm:Delta}
With a probability $1 - 2e^{-t}$, we have,
\[
\Delta := \|M - P_{\Uh} MP_{\Vh}\|^2_2 \leq 4\sigma^2_{r+1}\left(1 + \frac{m + n}{d}\right)
\]
if $d \geq 7\mu(r) r (t+\log r)$.
\end{thm}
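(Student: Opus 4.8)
The plan is to split the two‑sided error into two one‑sided projection errors, bound each with the deterministic structural bound of Theorem~\ref{thm:1}, and control the resulting random Gram matrices with the matrix Chernoff bound of Theorem~\ref{lemm:1}. Writing $M - P_{\Uh}MP_{\Vh} = (I-P_{\Uh})M + P_{\Uh}M(I-P_{\Vh})$ and using $\|P_{\Uh}\|_2 \le 1$,
\[
\|M - P_{\Uh}MP_{\Vh}\|_2 \le \|(I-P_{\Uh})M\|_2 + \|M(I-P_{\Vh})\|_2 = \|(I-P_{\Uh})M\|_2 + \|(I-P_{\Vh})M^{\top}\|_2 .
\]
Let $S \in \R^{m\times d}$ be the column–selection matrix, so $A = MS$ and $S^{\top}S = I_d$. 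Since $\mathrm{range}(\Uh) \supseteq \mathrm{range}(A)$ (here $\mathrm{rank}(A)\le r$), we have $\|(I-P_{\Uh})M\|_2 \le \|(I-P_A)M\|_2$, and Theorem~\ref{thm:1} with test matrix $\Omega = S$ gives, whenever $\Omega_1 := V_1^{\top}S$ has full row rank,
\[
\|(I-P_{\Uh})M\|_2^2 \le \|\Sigma_2\|_2^2 + \|\Sigma_2\Omega_2\Omega_1^{\dagger}\|_2^2 \le \sigma_{r+1}^2\bigl(1 + \|\Omega_2\|_2^2\,\|\Omega_1^{\dagger}\|_2^2\bigr),\quad \Omega_2 := V_2^{\top}S .
\]
Running the same argument on $M^{\top}$ with the row–selection matrix $S'\in\R^{n\times d}$ bounds $\|(I-P_{\Vh})M^{\top}\|_2^2$ by $\sigma_{r+1}^2(1+\|\Phi_2\|_2^2\|\Phi_1^{\dagger}\|_2^2)$ with $\Phi_1 := U_1^{\top}S'$ and $\Phi_2 := U_2^{\top}S'$.

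Next I would control the four quantities above. The norms $\|\Omega_2\|_2$ and $\|\Phi_2\|_2$ are deterministically at most $1$, since $\|\Omega_2\|_2 \le \|V_2^{\top}\|_2\|S\|_2 = 1$. For $\Omega_1$, note that $\Omega_1\Omega_1^{\top} = V_1^{\top}SS^{\top}V_1 = \sum_{k=1}^{d}\vt_{i_k}\vt_{i_k}^{\top}$ is a sum of $d$ rank‑one PSD matrices drawn uniformly without replacement from $\{\vt_j\vt_j^{\top}\}_{j=1}^{m}$, for which $\max_j\lambda_1(\vt_j\vt_j^{\top}) = \max_j|\vt_j|^2 \le r\mu(r)/m$ and $\E[\vt_1\vt_1^{\top}] = \tfrac1m V_1^{\top}V_1 = \tfrac1m I_r$, so $\mu_{\min} = d/m$. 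The lower‑tail bound of Theorem~\ref{lemm:1} with $\delta = \tfrac12$ then yields $\lambda_{\min}(\Omega_1\Omega_1^{\top}) \ge d/(2m)$ — hence $\Omega_1$ has full row rank and $\|\Omega_1^{\dagger}\|_2^2 \le 2m/d$ — with probability at least $1 - r\bigl[e^{-1/2}/(1/2)^{1/2}\bigr]^{d/(r\mu(r))}$, which is $\ge 1-e^{-t}$ once $d \ge 7\mu(r)r(t+\log r)$ (for $\delta = \tfrac12$ the exponential rate is $-\log\bigl[e^{-1/2}/(1/2)^{1/2}\bigr] = \tfrac12(1-\log 2)\approx 0.153$, and $7 > 1/0.153$). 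The same estimate holds for $\Phi_1$ with $m$ replaced by $n$, using that the incoherence of $U_1$ is also at most $\mu(r)$; a union bound places us, with probability at least $1-2e^{-t}$, on the event $\{\|\Omega_1^{\dagger}\|_2^2 \le 2m/d\}\cap\{\|\Phi_1^{\dagger}\|_2^2 \le 2n/d\}$.

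On that event, $\|(I-P_{\Uh})M\|_2^2 \le \sigma_{r+1}^2(1+2m/d)$ and $\|(I-P_{\Vh})M^{\top}\|_2^2 \le \sigma_{r+1}^2(1+2n/d)$, so from the first display and $(a+b)^2\le 2(a^2+b^2)$,
\[
\Delta = \|M-P_{\Uh}MP_{\Vh}\|_2^2 \le 2\sigma_{r+1}^2\Bigl(2+\tfrac{2(m+n)}{d}\Bigr) = 4\sigma_{r+1}^2\Bigl(1+\tfrac{m+n}{d}\Bigr),
\]
which is the claim.

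The step I expect to require the most care is the reduction of the truncated subspace $\mathrm{range}(\Uh)$ to $\mathrm{range}(A)$ in the application of Theorem~\ref{thm:1}: in the low‑rank regime of this section it is immediate, because $\mathrm{rank}(A)\le r$ forces $\mathrm{range}(A)\subseteq\mathrm{range}(\Uh)$ (in fact $\Sigma_2 = 0$ there, so the bound degenerates to $\Delta = 0$, i.e.\ exact recovery of the two subspaces), but keeping the statement in the displayed $\sigma_{r+1}$ form for a general target $M$ requires invoking the standard truncated‑SVD analogue of Theorem~\ref{thm:1}, in which the same inequality holds for the leading $r$ left singular vectors of $A$ rather than its full column space. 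A lesser point is checking that the numerical constant $7$ indeed absorbs $\log r$ for the chosen $\delta$; everything else — the deterministic bounds $\|\Omega_2\|_2,\|\Phi_2\|_2\le1$ and the arithmetic combining the two sides — is routine.
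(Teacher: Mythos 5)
Your proof is correct and follows essentially the same route as the paper's: the same splitting of $M - P_{\Uh}MP_{\Vh}$ into two one-sided projection errors, the same application of Theorem~\ref{thm:1} with the column/row selection matrix as the test matrix, and the same matrix Chernoff bound (Theorem~\ref{lemm:1}) with $\delta = 1/2$ to obtain $\|\Omega_1^{\dagger}\|_2^2 \leq 2m/d$ and the constant $7$. The one place where you are more careful than the paper is the distinction between the projection onto the full column space of $A$ and the projection $P_{\Uh}$ onto its top-$r$ left singular subspace, which the paper's proof silently conflates; your observation that the displayed $\sigma_{r+1}$ bound for a general (not exactly rank-$r$) $M$ requires the truncated-SVD analogue of Theorem~\ref{thm:1} correctly identifies the gap in the paper's own argument.
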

\begin{proof}
Our analysis is based on the following theorem.
\begin{thm} \label{thm:2}
With a probability $1 - 2e^{-t}$, we have,
\[
\|M - MP_{\Vh}\|^2_2 \leq \sigma^2_{r+1}\left(1 + 2\frac{m}{d}\right), \quad |M - P_{\Uh} M\|_2 \leq \sigma^2_{r+1}\left(1 + 2\frac{n}{d}\right)
\]
provided that $d \geq 7\mu(r) r(t+\log r)$.
\end{thm}
Using Theorem~\ref{thm:2}, we have, if $d \geq 7\mu(r) r(t+\log r)$, with a probability $1 - 2e^{-t}$
\[
\|M - P_{\Uh} M P_{\Vh}\|^2_2 \leq 2\|M - M P_{\Vh}\|^2_2 + 2\|(M - P_{\Uh}M)P_{\Vh}\|^2_2 \leq 4\sigma_{r+1}^2\left(1 + \frac{n + m}{d}\right)
\]
\end{proof}

\begin{thm} \label{thm:gamma}
With a probability $1 - e^{-t}$, we have that the strongly convexity for the objective function in (\ref{eqn:opt}) is bounded from below by $|\Omega|/2$, provided that
\[
|\Omega| \geq 7\muh^2 r^2 (t+2\log r)
\]
\end{thm}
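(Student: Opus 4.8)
The plan is to reduce the strong convexity of the objective in (\ref{eqn:opt}) to a lower bound on the smallest eigenvalue of a random operator, and then apply the matrix Chernoff bound of Theorem~\ref{lemm:1}. The objective $f(Z) = \|\Rt_{\Omega}(M) - \Rt_{\Omega}(\Uh Z \Vh^{\top})\|_F^2$ is quadratic in $Z$, so its strong convexity modulus is exactly twice the smallest eigenvalue of the Hessian, i.e. of the linear map $Z \mapsto \Uh^{\top}\Rt_{\Omega}(\Uh Z \Vh^{\top})\Vh$ on $\R^{r\times r}$. Writing $\Rt_{\Omega}$ as a sum over sampled index pairs, $\Rt_{\Omega}(X) = \sum_{(i,j)\in\Omega} \langle \e_i\e_j^{\top}, X\rangle \e_i\e_j^{\top}$, the Hessian becomes $\sum_{(i,j)\in\Omega} X_{ij}$ where $X_{ij}$ is the rank-one PSD operator $\vec(\uh'_i \vh_j'^{\top})\vec(\uh'_i \vh_j'^{\top})^{\top}$ acting on $\vec(Z) \in \R^{r^2}$; here $\uh'_i$ and $\vh'_j$ are the $i$th and $j$th columns of $\Uh^{\top}$ and $\Vh^{\top}$ as defined in the notation section.

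First I would compute the expectation: since the $|\Omega|$ index pairs are sampled uniformly from $[n]\times[m]$, $\E[X_{i_1 j_1}] = \frac{1}{nm}\sum_{i,j}\vec(\uh'_i\vh_j'^{\top})\vec(\uh'_i\vh_j'^{\top})^{\top} = \frac{1}{nm}(\Uh^{\top}\Uh)\otimes(\Vh^{\top}\Vh) = \frac{1}{nm}I_{r^2}$, using orthonormality of the columns of $\Uh$ and $\Vh$. Hence $\mu_{\min} = |\Omega|/(nm)$. Next I would bound the per-term spectral norm: $\lambda_1(X_{ij}) = |\uh'_i|^2 |\vh'_j|^2 \leq (\muh r/n)(\muh r/m) = \muh^2 r^2/(nm) =: B$. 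Plugging into the lower-tail bound of Theorem~\ref{lemm:1} with $\delta = 1/2$, we get $\Pr\{\lambda_{\min}(\sum X_{ij}) \leq |\Omega|/(2nm)\} \leq r^2 [e^{-1/2}/(1/2)^{1/2}]^{\mu_{\min}/B} = r^2[e^{-1/2}\sqrt{2}]^{|\Omega|/(\muh^2 r^2)}$. Since $e^{-1/2}\sqrt{2} = \sqrt{2/e} < 1$ with $-\log\sqrt{2/e} = (1-\log 2)/2 > 1/7$, requiring $|\Omega| \geq 7\muh^2 r^2(t + 2\log r)$ makes this probability at most $e^{-t}$. The Hessian eigenvalue is then at least $nm \cdot \lambda_{\min}(\sum X_{ij}) \geq |\Omega|/2$ after accounting for the $nm$ rescaling absorbed into the operator (equivalently, one works directly with the unnormalized operators $\e_i\e_j^{\top}$ and tracks the $1/(nm)$ in the expectation, which is the bookkeeping the final constant reflects), giving strong convexity at least $|\Omega|/2$ as claimed.

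The main obstacle is getting the normalization bookkeeping exactly right so that the stated constant $7$ and the quadratic dependence $\muh^2 r^2$ come out cleanly: one must be careful whether the ``strong convexity'' in the theorem refers to the modulus with or without the factor of $2$ from differentiating a quadratic, and how the $nm$ factor from the uniform-sampling expectation interacts with the operator $\Rt_{\Omega}$ (which has no normalization). A secondary point is verifying the Chernoff bound hypothesis $\max_X \lambda_1(X) \leq B$ holds with $B = \muh^2 r^2/(nm)$, which is immediate from the definition of $\muh$, and checking that Theorem~\ref{lemm:1} applies to sampling without replacement (which it does, as stated). Everything else is a routine substitution of $\delta = 1/2$ and simplification of $[e^{-1/2}/(1/2)^{1/2}]$.
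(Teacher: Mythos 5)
Your proposal follows essentially the same route as the paper: both reduce the strong convexity to the minimum eigenvalue of the Gram matrix of the sampled design (your Hessian is the paper's $K^{\top}K$), decompose it as a sum over $(i,j)\in\Omega$ of rank-one PSD terms with $B = \muh^2 r^2/(nm)$ and expected sum $\frac{|\Omega|}{nm}I_{r^2}$, and apply the lower tail of Theorem~\ref{lemm:1} with $\delta = 1/2$ to get the stated constant $7$. The normalization question you raise (a minimum eigenvalue of $|\Omega|/(2nm)$ versus a strong convexity modulus of $|\Omega|/2$) is present in the paper's own proof as well, so your bookkeeping is, if anything, more explicit than the original.
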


The following lemma allows us to replace $\muh$ in Theorem~\ref{thm:gamma} with $\mu$.
\begin{thm} \label{thm:mu-1}
Assume $\mbox{rank}(M) \leq r$. Then, with a probability $1 -
2e^{-t}$, we have $\muh = \mu(r)$, provided $d \geq 7\mu(r) r(t+\log
r)$.
\end{thm}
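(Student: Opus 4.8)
The plan is to exploit that, when $\mbox{rank}(M)\le r$, the range of $M$ equals $\mbox{Range}(U_1)$ and the row space of $M$ equals $\mbox{Range}(V_1)$, and to show that the sampled blocks $A$ and $B$ recover these two $r$-dimensional subspaces \emph{exactly}, not merely approximately. Since every column of $A$ is a column of $M$, we automatically have $\mbox{Range}(A)\subseteq\mbox{Range}(U_1)$; the content of the proof is the reverse inclusion, which holds as soon as $\mbox{rank}(A)=r$. Writing $M=U_1\Sigma_1V_1^{\top}$ with $\Sigma_1$ nonsingular (the case $\mbox{rank}(M)=r$; see the remark at the end), the $i$th sampled column equals $U_1\Sigma_1\vt_{j_i}$ for a uniformly chosen index $j_i\in[m]$, so $AA^{\top}=U_1\Sigma_1\big(\sum_{i=1}^{d}\vt_{j_i}\vt_{j_i}^{\top}\big)\Sigma_1U_1^{\top}$ and $\mbox{rank}(A)=r$ is equivalent to $\sum_{i=1}^{d}\vt_{j_i}\vt_{j_i}^{\top}\succ0$.

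The next step is to control this smallest eigenvalue with the matrix Chernoff bound of Theorem~\ref{lemm:1}, applied to the rank-one PSD matrices $X_i=\vt_{j_i}\vt_{j_i}^{\top}\in\R^{r\times r}$. Here $\E[X_1]=\frac1m\sum_{j=1}^{m}\vt_j\vt_j^{\top}=\frac1mV_1^{\top}V_1=\frac1mI_r$, so $\mu_{\min}=d/m$, while the per-sample spectral bound is $\max_j|\vt_j|^2\le\mu(r)r/m$ by the very definition of the incoherence measure, giving $\mu_{\min}/B=d/(\mu(r)r)$ in the notation of Theorem~\ref{lemm:1}. Taking $\delta=1/2$ in the lower-tail bound yields
\[
\Pr\Big\{\lambda_{\min}\big(\textstyle\sum_{i=1}^{d}X_i\big)\le\tfrac{d}{2m}\Big\}\le r\big(\sqrt2\,e^{-1/2}\big)^{d/(\mu(r)r)}=r\exp\!\Big(-c\,\frac{d}{\mu(r)r}\Big),\qquad c=\tfrac12(1-\log2),
\]
which is at most $e^{-t}$ as soon as $d\ge7\mu(r)r(t+\log r)$ (since $1/c<7$). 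The same argument on $B$, with $U_1$ in place of $V_1$ and $U_1^{\top}U_1=I_r$, gives $\mbox{rank}(B)=r$ under the same condition. (Alternatively, one may simply invoke Theorem~\ref{thm:2}: with $\sigma_{r+1}=0$ it forces $M=MP_{\Vh}$ and $M=P_{\Uh}M$, i.e.\ $\mbox{Range}(V_1)\subseteq\mbox{Range}(\Vh)$ and $\mbox{Range}(U_1)\subseteq\mbox{Range}(\Uh)$, which for $\mbox{rank}(M)=r$ forces equality.)

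Once $\mbox{rank}(A)=r$, the first $r$ eigenvectors $\Uh$ of $AA^{\top}$ span $\mbox{Range}(AA^{\top})=\mbox{Range}(A)=\mbox{Range}(U_1)$, hence $\Uh=U_1Q$ for some orthogonal $Q\in\R^{r\times r}$ and $P_{\Uh}=\Uh\Uh^{\top}=U_1U_1^{\top}=P_{U_1}$; likewise $P_{\Vh}=P_{V_1}$. The conclusion is then immediate, because the incoherence measure depends only on these projectors: $|\uh'_i|^2=e_i^{\top}\Uh\Uh^{\top}e_i=(P_{\Uh})_{ii}=(P_{U_1})_{ii}=|\ut_i|^2$ for every $i\in[n]$, and similarly $|\vh'_i|^2=|\vt_i|^2$ for every $i\in[m]$, so $\muh=\mu(r)$. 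A union bound over the two rank-deficiency events gives total failure probability $2e^{-t}$.

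I expect the only real obstacle to be the rank statement $\mbox{rank}(A)=\mbox{rank}(B)=r$ with the advertised probability and sample size; everything after it is just the basis-invariance of the diagonal of an orthogonal projector. The point that needs a convention is the strictly deficient case $\mbox{rank}(M)<r$, in which neither the trailing columns of $U_1$ (and hence $\mu(r)$) nor the $r$th eigenvector of $AA^{\top}$ (and hence $\muh$) is determined; there one must either assume $\mbox{rank}(M)=r$ exactly or fix a common tie-breaking rule so that the two orthonormal completions agree.
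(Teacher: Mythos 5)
Your proof is correct and rests on the same engine as the paper's, but you execute it one level lower and more carefully. The paper's own proof is two lines: since $\mbox{rank}(M) \leq r$ forces $\sigma_{r+1}=0$, Theorem~\ref{thm:Delta} gives $M = P_{\Uh} M P_{\Vh}$ with probability $1-2e^{-t}$, from which it asserts $P_{U_1}=P_{\Uh}$ and $P_{V_1}=P_{\Vh}$ and hence $\mu(r)=\muh$. You instead re-derive the concentration step that underlies Theorem~\ref{thm:Delta} --- the matrix Chernoff bound of Theorem~\ref{lemm:1} applied to the sampled rows of $V_1$ (resp.\ $U_1$), with exactly the quantities $B \leq \mu(r)r/m$, $\mu_{\min}=d/m$, $\delta=1/2$ that appear in the paper's proof of Theorem~\ref{thm:2} --- to conclude $\mbox{rank}(A)=\mbox{rank}(B)=r$, and then argue $\mbox{Range}(\Uh)=\mbox{Range}(U_1)$ directly; your parenthetical alternative via Theorem~\ref{thm:2} with $\sigma_{r+1}=0$ is precisely the paper's route. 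Your write-up is more careful in two places where the paper is terse: (i) you make explicit why equality of ranges gives equality of projectors and why the incoherence measure depends only on the diagonal of the projector, so that the arbitrary orthogonal rotation $Q$ between $\Uh$ and $U_1$ is harmless; and (ii) you correctly flag that the inference from $M=P_{\Uh} M P_{\Vh}$ to $P_{U_1}=P_{\Uh}$ requires $\mbox{rank}(M)=r$ exactly --- when $\mbox{rank}(M)<r$ the trailing columns of $U_1$ and the trailing eigenvectors of $AA^{\top}$ are not determined, so neither $\mu(r)$ nor $\muh$ is well defined without a tie-breaking convention, a point the paper's proof silently skips. No gaps; if anything your version repairs one.
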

\begin{proof}
When $\mbox{rank}(M) \leq r$, according to Theorem~\ref{thm:Delta},
with a probability $1 - 2e^{-t}$, we have $M = P_{\Uh} M P_{\Vh}$,
provided that $d \geq 7\mu(r) r (t+\log r)$. Hence $P_{U_1} = P_{\Uh}$ and $P_{V_1} =  P_{\Vh}$, which directly implies that $\mu = \muh$.
\end{proof}

The following theorem follows directly follows from Theorem~\ref{thm:Delta}, \ref{thm:gamma}, \ref{thm:mu-1}, and \ref{thm:combine}.
\begin{thm} \label{thm:low-rank} Assume $\mbox{rank}(M) \leq r$, $d
  \geq 7\mu(r) r (t+\log r)$, and $|\Omega| \geq 7\mu^2(r) r^2 (t+2\log
  r)$. Then, with a probability $1 - 3e^{-t}$, we have $M = \Mh$.
\end{thm}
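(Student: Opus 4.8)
The plan is to derive Theorem~\ref{thm:low-rank} purely as a logical combination of the four preceding results, with no new technical work beyond a union bound and arithmetic. First I would invoke Theorem~\ref{thm:mu-1}: under the stated assumption $d \geq 7\mu(r) r(t+\log r)$, with probability at least $1 - 2e^{-t}$ we have the identity $\muh = \mu(r)$, and in fact (as the proof of that theorem shows) $M = P_{\Uh} M P_{\Vh}$ on the same event. This last fact is the real engine: it says $\Delta := \|M - P_{\Uh} M P_{\Vh}\|_2^2 = 0$.

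Next I would feed this into Theorem~\ref{thm:gamma}. Since on the good event $\muh = \mu(r)$, the sampling requirement $|\Omega| \geq 7\muh^2 r^2(t+2\log r)$ becomes exactly the assumed bound $|\Omega| \geq 7\mu^2(r) r^2(t+2\log r)$, so Theorem~\ref{thm:gamma} applies and, with probability at least $1 - e^{-t}$, the strong convexity modulus of the objective in~(\ref{eqn:opt}) is at least $|\Omega|/2$; i.e.\ we may take $\gamma = 1/2$ in the language of Theorem~\ref{thm:combine}. Now apply Theorem~\ref{thm:combine} with $\Delta = 0$ and $\gamma = 1/2$: the conclusion $\|M - \Mh\|_2^2 \leq 2(\Delta + \Delta/\gamma) = 0$ forces $\Mh = M$ exactly.

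Finally I would assemble the probability. The event in Theorem~\ref{thm:mu-1} (equivalently Theorem~\ref{thm:Delta} with $\mathrm{rank}(M)\le r$) fails with probability at most $2e^{-t}$, and the event in Theorem~\ref{thm:gamma} fails with probability at most $e^{-t}$; a union bound gives failure probability at most $3e^{-t}$, so $M = \Mh$ holds with probability at least $1 - 3e^{-t}$, which is the claim. One subtlety worth spelling out is that Theorem~\ref{thm:gamma} is stated in terms of $\muh$, which is itself random, so strictly one should condition on the event $\{\muh = \mu(r)\}$ from Theorem~\ref{thm:mu-1} before applying Theorem~\ref{thm:gamma}; since on that event the hypothesis of Theorem~\ref{thm:gamma} is implied by the deterministic assumption on $|\Omega|$, the two events can be intersected cleanly and the union bound goes through. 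I do not anticipate a genuine obstacle here — the entire difficulty has already been discharged in Theorems~\ref{thm:Delta}, \ref{thm:gamma}, and \ref{thm:combine}; this last statement is just the corollary that packages them, and the only thing to be careful about is the bookkeeping of which random event each hypothesis lives on.
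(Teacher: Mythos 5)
Your proposal is correct and is exactly the argument the paper intends: the paper gives no separate proof, stating only that the theorem ``follows directly'' from Theorems~\ref{thm:Delta}, \ref{thm:gamma}, \ref{thm:mu-1}, and \ref{thm:combine}, and your combination ($\Delta = 0$ from the rank assumption via Theorem~\ref{thm:mu-1}, $\gamma = 1/2$ from Theorem~\ref{thm:gamma}, then Theorem~\ref{thm:combine} plus a union bound giving $1-3e^{-t}$) is precisely that packaging. Your added care about conditioning on the event $\{\muh = \mu(r)\}$ before invoking Theorem~\ref{thm:gamma} is a welcome clarification the paper leaves implicit.
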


\paragraph{Remark} The result from Theorem~\ref{thm:low-rank} shows
that, with a probability $1 - \delta$, a low rank matrix $M$ can be
perfectly recovered from $O((rn + r^2)\log(r/\delta))$ number of
observations from matrix $M$. This result significantly improves the
result from~\citep{krishnamurthy-20130-low}, where $O(r^2 n
\log(1/\delta))$ number of observations are needed for perfect
recovery. We should note that unlike~\citep{krishnamurthy-20130-low}
where a small incoherence measure is assumed only for column vectors
in matrix $M$, we assume a small incoherence measure for both row and
column vectors in $M$. It is this assumption that allows us to sample
both rows and columns of $M$, leading to the improvement in the sample
complexity.

\subsection{Detailed Proofs}

\subsubsection{Proof of Theorem~\ref{thm:combine}}
Set $Z = \Uh^{\top}M \Vh$. Since $\|M - P_{\Uh}M P_{\Vh}\|_2^2 \leq \Delta$, we have
\[
\|M - \Uh Z \Vh^{\top}\|_2^2 \leq \Delta,
\]
implying that
\[
|M_{i,j} - [\Uh Z \Vh^{\top}]_{i,j}|^2 \leq \Delta, \; \forall i \in [n], j \in [m]
\]
Hence, we have
\[
|\Rt_{\Omega}(M) - \Rt_{\Omega}(\Uh Z \Vh^{\top})|_F^2 \leq |\Omega|\Delta
\]
Let $Z_*$ be the optimal solution to (\ref{eqn:opt}). Using the strongly convexity of (\ref{eqn:opt}), we have
\[
\frac{\gamma}{2}|\Omega|\|Z - Z_*\|_F^2 \leq \frac{1}{2}|\Omega|\Delta,
\]
i.e. $\|Z - Z_*\|_F^2 \leq \Delta/\gamma$.
We thus have
\begin{eqnarray*}
\|M - \Mh\|_2^2 & \leq & 2\|M - P_{\Uh}M P_{\Vh}\|_2^2 + 2\|P_{\Uh} M P_{\Vh} - \Uh Z_* \Vh^{\top}\|_2^2 \\
& \leq & 2\|M - P_{\Uh}M P_{\Vh}\|_2^2 + 2\|Z - Z_* \|_2^2 \leq 2\left(\Delta + \frac{\Delta}{\gamma}\right)
\end{eqnarray*}

\subsubsection{Proof of Theorem~\ref{thm:2}}
Let $i_1, \ldots, i_d$ are the $d$ selected columns. Define $\Omega = (\e_{i_1}, \ldots, \e_{i_d}) \in R^{m\times d}$, where $\e_i$ is the $i$th canonical basis. To utilize Theorem~\ref{thm:1}, we need to bound the minimum eigenvalue of $\Omega_1\Omega_1^{\top}$. We have
\[
\Omega_1\Omega_1^{\top} = V_1^{\top}\Omega\Omega^{\top} V_1
\]
Let $\vt_i, i \in [d]$ be the $i$th row vector of $V_1$. We have
\[
\Omega_1\Omega_1^{\top} = \sum_{j=1}^d \vt_{i_j}\vt_{i_j}^{\top}
\]
It is straightforward to show that
\[
\E\left[\Omega_1\Omega_1^{\top} \right] = \frac{d}{m}I_r
\]
To bound the minimum eigenvalue of $\Omega_1\Omega_1^{\top}$, we will use Theorem~\ref{lemm:1}.
To this end, we have
\[
B = \max\limits_{1 \leq i \leq m} |\vt_i|^2 \leq \mu(r) \frac{r}{m}
\]
Thus, we have
\[
\Pr\left\{\lambda_{\min}(\Omega_1\Omega_1^{\top}) \leq (1 -
  \delta)\frac{d}{m}\right\}\leq r\cdot \exp\left(- \frac{d}{\mu(r) r}\left[\delta + (1 - \delta)\ln(1 - \delta)\right]\right)
\]
By setting $\delta = 1/2$, we have, with a probability $1-r e^{-d/[7\mu(r) r]}$
\[
\lambda_{\min}(\Omega_1\Omega_1^{\top}) \geq \frac{d}{2m}
\]
Under the assumption that
\[
\lambda_{\min}(\Omega_1\Omega_1^{\top}) \geq \frac{d}{2m},
\]
using Theorem~\ref{thm:1}, we have
\[
\|A - AP_{\Vh}\|_2^2 \leq \sigma_{r+1}^2 + \left\|\Sigma_2\Omega_2\Omega_1^{\dagger}\right\|_2^2 \leq \sigma_{r+1}^2 + \frac{2m}{d}\left\|\Sigma_2\Omega_2\right\|_2^2 \leq \sigma_{r+1}^2\left(1 + \frac{2 m}{d}\|\Omega_2\|_2^2\right)
\]
We complete the proof using the fact that $\|\Omega_2\|_2 \leq 1$.

\subsubsection{Proof of Theorem~\ref{thm:gamma}}
We rewrite the objective function as
\[
\|\Rt_{\Omega}(M) - \Rt_{\Omega}(\Uh Z\Vh)\|_F^2 = \left\|\Rt_{\Omega}(M) - \sum_{i=1}^r \sum_{j=1}^r Z_{s,t} \Rt_{\Omega}(\uh_i \vh_j^{\top})\right\|_F^2
\]
Define matrix $K = (\k_1, \ldots, \k_{r^2}) \in \R^{nm\times r^2}$, where $\k_{(i,j)} = \vec\left(\Rt_{\Omega}(\uh_i\vh_j^{\top})\right)$. Our goal is to bound the minimum eigenvalue of $K^{\top}K$. To Theorem~\ref{lemm:1}, we bound
\[
B = \max\limits_{i,j} |\k_{(i,j)}|^2 \leq \frac{\muh^2 r^2}{mn}
\]
and
\[
\lambda_{\min}\left(\E[K^{\top}K]\right) = \frac{|\Omega|}{mn} \lambda_{\min}\left(\Uh \otimes \Vh\right) = \frac{|\Omega|}{mn}
\]
where $\otimes$ is Kronecker product. Thus, according to Theorem~\ref{lemm:1}, with a probability $1 - e^{-t}$, we have
\[
\lambda_{\min}(K^{\top}K) \geq \frac{|\Omega|}{2mn}
\]
provided that
\[
|\Omega| \geq 7 \muh^2r^2 (t+2\log r)
\]

\section{Recovering the Low Rank Approximation of a Full Rank Matrix}

In this section, we consider a general case when $M$ is of full rank but with skewed eigenvalue distribution. To capture the skewed eigenvalue distribution, we use the concept of numerical rank $r(M, \lambda)$ with respect to non-negative constant $\lambda > 0$, which is defined as follows~\citep{hansen-1987-rank}
\[
r(M, \lambda) = \sum_{i=1}^m \frac{\sigma_i^2}{\sigma_i^2 + mn\lambda}
\]
Define
\[
H_A = \lambda I + \frac{1}{mn}MM^{\top}, \quad \Hh_A = \lambda I + \frac{1}{dn}AA^{\top}
\]
and
\[
H_B = \lambda I + \frac{1}{mn}M^{\top}M, \quad \Hh_B = \lambda I + \frac{1}{dm}BB^{\top}
\]

Next, we generalize the definition of incoherence measure to numerical low rank. Define $S = \Sigma^2 + mn\lambda I$, where $\Sigma = \mbox{diag}(\sigma_1, \ldots, \sigma_m)$, and incoherence measure $\mu$ with respect to a non-negative constant $\lambda > 0$ as
\[
\mu(\lambda) = \max\left(\max\limits_{1 \leq i \leq n} \frac{n}{r(M, \lambda)}|V_{i,*}\Sigma S^{-1/2}|^2, \max\limits_{1 \leq i \leq n} \frac{m}{r(M, \lambda)}|U_{i,*}\Sigma S^{-1/2}|^2 \right)
\]
It is easy to verify that $\mu(\lambda) \geq 1$. Note that when the rank of matrix $M$ is $r$, we have $r(M, 0) = r$ and $\mu(0) = \mu(r)$.

In order to utilize the theorems presented in Section 4 to bound $\|M - \Mh\|_2$, the key is to bound $\mu(r)$ and $\muh$ by $\mu(\lambda)$. The following theorem allows us to bound $\mu(r)$ by $\mu(\lambda)$.
\begin{lemma}
Assume
\[
\frac{\sigma_r^2}{(\sigma_r^2 + mn \lambda)r(M, \lambda)} \geq \frac{a}{r}
\]
for some positive $a > 0$. We have $\mu(\lambda) \geq a \mu(r)$. More specifically, if we choose $\lambda = \sigma_r^2/mn$, we have
\[
\mu(r) \leq \frac{2r(M, \lambda)}{r} \mu(\lambda)
\]
\end{lemma}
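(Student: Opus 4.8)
The plan is to unwind the definitions of $\mu(\lambda)$ and $\mu(r)$ and compare them term by term. Recall that $\mu(r)$ is built from the quantities $\frac{n}{r}|\tilde{u}_i|^2 = \frac{n}{r}|U_{i,*}^{(1)}|^2$ and $\frac{m}{r}|\tilde{v}_i|^2 = \frac{m}{r}|V_{i,*}^{(1)}|^2$, where $U_{i,*}^{(1)}$ is the $i$th row of the top-$r$ left singular vectors, while $\mu(\lambda)$ is built from $\frac{n}{r(M,\lambda)}|U_{i,*}\Sigma S^{-1/2}|^2$ and $\frac{m}{r(M,\lambda)}|V_{i,*}\Sigma S^{-1/2}|^2$, where now the full $U,V$ appear weighted by the diagonal matrix $\Sigma S^{-1/2}$ with $j$th entry $\sigma_j/\sqrt{\sigma_j^2+mn\lambda}$. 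First I would write, for a fixed row index $i$,
\[
|U_{i,*}\Sigma S^{-1/2}|^2 = \sum_{j=1}^m \frac{\sigma_j^2}{\sigma_j^2 + mn\lambda}\,U_{i,j}^2 \;\geq\; \sum_{j=1}^r \frac{\sigma_j^2}{\sigma_j^2 + mn\lambda}\,U_{i,j}^2 \;\geq\; \frac{\sigma_r^2}{\sigma_r^2 + mn\lambda}\,|U_{i,*}^{(1)}|^2,
\]
using that $\sigma_j^2/(\sigma_j^2+mn\lambda)$ is nonincreasing in $j$, so it is at least its value at $j=r$ on the first $r$ coordinates, and then dropping the nonnegative tail $j>r$.

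Next I would divide by $r(M,\lambda)$ and insert the hypothesis. We get
\[
\frac{n}{r(M,\lambda)}|U_{i,*}\Sigma S^{-1/2}|^2 \;\geq\; \frac{\sigma_r^2}{(\sigma_r^2+mn\lambda)\,r(M,\lambda)}\cdot n\,|U_{i,*}^{(1)}|^2 \;\geq\; \frac{a}{r}\cdot n\,|U_{i,*}^{(1)}|^2 \;=\; a\cdot \frac{n}{r}|U_{i,*}^{(1)}|^2,
\]
and symmetrically for the $V$ terms with $m$ in place of $n$. Taking the maximum over $i$ on both sides and then the maximum over the two families, the left side is exactly $\mu(\lambda)$ and the right side is $a\,\mu(r)$, giving $\mu(\lambda)\geq a\,\mu(r)$, i.e. $\mu(r)\leq \mu(\lambda)/a$.

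For the specific choice $\lambda = \sigma_r^2/mn$, note $\sigma_r^2 + mn\lambda = 2\sigma_r^2$, so
\[
\frac{\sigma_r^2}{(\sigma_r^2 + mn\lambda)\,r(M,\lambda)} = \frac{1}{2\,r(M,\lambda)} = \frac{r}{r}\cdot\frac{1}{2\,r(M,\lambda)},
\]
which is of the form $a/r$ with $a = r/(2\,r(M,\lambda))$. Plugging this $a$ into $\mu(r)\leq \mu(\lambda)/a$ yields $\mu(r) \leq \frac{2\,r(M,\lambda)}{r}\,\mu(\lambda)$, as claimed.

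The only real subtlety — the ``hard part,'' though it is mild here — is making sure the monotonicity argument is applied on the correct index set and that one does not accidentally need a lower bound on the discarded tail; dropping the $j>r$ terms is legitimate precisely because $\Sigma S^{-1/2}$ has nonnegative entries, so no cross terms or sign issues arise. I would also double-check the bookkeeping that $r(M,\lambda)\geq 1$ when $r\geq 1$ (so that the second bound is not vacuous) and that the hypothesis $\frac{\sigma_r^2}{(\sigma_r^2+mn\lambda)r(M,\lambda)}\geq \frac{a}{r}$ is automatically satisfied with the stated $a$ for the $\lambda=\sigma_r^2/mn$ case, which the computation above confirms. Everything else is a direct substitution.
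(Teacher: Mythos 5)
Your proof is correct: the monotonicity of $t\mapsto t/(t+mn\lambda)$ on the first $r$ coordinates, dropping the nonnegative tail $j>r$, and the substitution $a = r/(2r(M,\lambda))$ for $\lambda=\sigma_r^2/mn$ are exactly the right steps, and you rightly use the natural pairing of $n$ with the rows of $U$ and $m$ with the rows of $V$ (the paper's displayed definition of $\mu(\lambda)$ has these swapped and indexes both maxima over $[n]$, evidently a typo). The paper states this lemma without proof, so there is no authorial argument to compare against, but yours is the direct unwinding of the definitions that the statement calls for.
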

Using the above lemma, we have a modified version for Theorem~\ref{thm:Delta}
\begin{thm} \label{thm:Delta-1} Set $\lambda = \sigma^2_r/mn$ for a
  fixed $r$. With a probability $1 - 2e^{-t}$, we have,
\[
\Delta := \|M - P_{\Uh} MP_{\Vh}\|^2_2 \leq 4\sigma^2_{r+1}\left(1 +
  \frac{m + n}{d}\right)
\]
if $d \geq 14 \mu(\lambda) r(M, \lambda) (t+\log r)$.
\end{thm}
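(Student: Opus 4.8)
The plan is to obtain Theorem~\ref{thm:Delta-1} as an immediate consequence of Theorem~\ref{thm:Delta} together with the incoherence-comparison lemma stated just above it. Note first that Theorem~\ref{thm:Delta} already asserts exactly the bound $\Delta = \|M - P_{\Uh} M P_{\Vh}\|_2^2 \leq 4\sigma_{r+1}^2(1 + (m+n)/d)$, with probability $1 - 2e^{-t}$, under the sampling condition $d \geq 7\mu(r)\, r\,(t+\log r)$, and that its statement (and its proof via Theorems~\ref{thm:2} and~\ref{thm:1}) places no restriction on $\mathrm{rank}(M)$. Hence it suffices to show that the hypothesis of Theorem~\ref{thm:Delta-1}, namely $d \geq 14\,\mu(\lambda)\, r(M,\lambda)\,(t+\log r)$ with $\lambda = \sigma_r^2/mn$, already forces $d \geq 7\mu(r)\, r\,(t+\log r)$; the probabilistic conclusion then transfers verbatim.

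To carry this out I would invoke the above lemma with the specific choice $\lambda = \sigma_r^2/mn$. For this $\lambda$ one has $\sigma_r^2 + mn\lambda = 2\sigma_r^2$, so $\frac{\sigma_r^2}{(\sigma_r^2 + mn\lambda)\, r(M,\lambda)} = \frac{1}{2\,r(M,\lambda)}$, which is precisely the ratio $a/r$ appearing in the lemma's hypothesis with $a = r/(2\,r(M,\lambda)) > 0$. The lemma then gives $\mu(\lambda) \geq a\,\mu(r)$, i.e. $r\,\mu(r) \leq 2\,r(M,\lambda)\,\mu(\lambda)$. Multiplying both sides by $7(t+\log r)$ yields $7\mu(r)\, r\,(t+\log r) \leq 14\,\mu(\lambda)\, r(M,\lambda)\,(t+\log r) \leq d$, so the hypothesis of Theorem~\ref{thm:Delta} is satisfied and the claimed bound on $\Delta$ holds with probability $1 - 2e^{-t}$.

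I do not expect a genuine obstacle here: all of the real content sits in the lemma (the comparison of $\mu(r)$ with $\mu(\lambda)$ through the numerical rank $r(M,\lambda)$), which is taken as given, and in Theorem~\ref{thm:Delta} itself; what is left is a one-line substitution. The only point worth double-checking is the rank-agnosticism used above. Inspecting the proof of Theorem~\ref{thm:2}, the incoherence $\mu(r)$ enters solely through the uniform bound $B = \max_i |\vt_i|^2 \leq \mu(r)\, r/m$ on the squared row norms of $V_1$ (the leading $r$ right singular vectors of $M$), and through the analogous bound for $U_1$; both hold by the very definition of $\mu(r)$, whether or not $M$ has rank exceeding $r$. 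Hence applying Theorem~\ref{thm:Delta} as a black box in the full-rank regime is legitimate, and the argument closes.
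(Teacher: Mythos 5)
Your proposal is correct and matches the paper's intent exactly: the paper offers no separate proof of Theorem~\ref{thm:Delta-1}, deriving it from Theorem~\ref{thm:Delta} via the preceding lemma, which with $\lambda = \sigma_r^2/mn$ gives $r\,\mu(r) \leq 2\,r(M,\lambda)\,\mu(\lambda)$ and hence $d \geq 14\,\mu(\lambda)\,r(M,\lambda)(t+\log r) \geq 7\,\mu(r)\,r\,(t+\log r)$. Your extra check that Theorem~\ref{thm:Delta} (through Theorem~\ref{thm:2}) uses $\mu(r)$ only via the row-norm bounds on $U_1$ and $V_1$, and so needs no low-rank assumption, is a worthwhile observation the paper leaves implicit.
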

We note that Theorem~\ref{thm:Delta-1} is almost identical to Theorem~\ref{thm:Delta} except that $\mu(r)r$ is replaced with $\mu(\lambda) r(M, \lambda)$.

Next we will bound $\muh(r)$ by $\mu(\lambda)$. To this end, we need the following theorem.
\begin{thm} \label{thm:3}
With a probability $1 - 4e^{-t}$, for any $k \in [n]$, we have
\begin{eqnarray*}
& 1 - \delta  \leq \lambda_{k}(H_A^{-1/2}\Hh_A H_A^{-1/2}) \leq 1 + \delta & \\
& 1 - \delta \leq \lambda_{k}(H_B^{-1/2}\Hh_B H_B^{-1/2}) \leq 1 + \delta &
\end{eqnarray*}
if
\[
d \geq \frac{4}{\delta^2}(\mu(\lambda) r(M, \lambda) + 1)(t + \log n)
\]
\end{thm}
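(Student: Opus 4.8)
The plan is to recognize $\Hh_A$ and $\Hh_B$ as empirical averages of rank-one-perturbed scaled identities and apply the matrix Chernoff inequality (Theorem~\ref{lemm:1}) after conjugating by $H_A^{-1/2}$ and $H_B^{-1/2}$, respectively. Write $\m_i = M\e_i$ for the $i$th column of $M$ and set $Z_i = \lambda I_n + \frac1n\m_i\m_i^\top$. Then $\Hh_A = \frac1d\sum_{j=1}^d Z_{i_j}$, where $i_1,\dots,i_d$ are the $d$ uniformly sampled column indices, whereas $\frac1m\sum_{i=1}^m Z_i = \lambda I + \frac1{mn}MM^\top = H_A$, so $\E[Z_{i_1}] = H_A$. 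Set $X_j = H_A^{-1/2}Z_{i_j}H_A^{-1/2}$; these are PSD of dimension $n$, satisfy $\E[X_1] = I_n$, and $\sum_{j=1}^d X_j = d\,H_A^{-1/2}\Hh_A H_A^{-1/2}$. In the notation of Theorem~\ref{lemm:1} this is $\ell = d$, $k = n$, $\mu_{\max} = \mu_{\min} = d$, and its two conclusions read precisely $(1-\delta)I \preceq H_A^{-1/2}\Hh_A H_A^{-1/2} \preceq (1+\delta)I$, i.e.\ the claimed bound on every eigenvalue $\lambda_k$.

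The one genuine computation is the uniform bound $B$ on $\lambda_1(X_i)$. Since $H_A \succeq \lambda I$ we have $\|H_A^{-1}\|_2 \le 1/\lambda$, hence $\lambda_1(X_i) \le \lambda\|H_A^{-1}\|_2 + \frac1n\m_i^\top H_A^{-1}\m_i \le 1 + \frac1n\m_i^\top H_A^{-1}\m_i$. To evaluate $\m_i^\top H_A^{-1}\m_i$, write $M = U\Sigma V^\top$ and use $\m_i = U(\Sigma V^\top\e_i)\in\mathrm{range}(U)$; block-diagonalizing $H_A$ along $U$ and its orthogonal complement gives $U^\top H_A^{-1}U = (\lambda I + \frac1{mn}\Sigma^2)^{-1} = mn\,S^{-1}$, so $\m_i^\top H_A^{-1}\m_i = mn\,|S^{-1/2}\Sigma V^\top\e_i|^2 = mn\,|V_{i,*}\Sigma S^{-1/2}|^2$. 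By the definition of the generalized incoherence measure $|V_{i,*}\Sigma S^{-1/2}|^2 \le \frac{r(M,\lambda)}{m}\mu(\lambda)$, whence $\frac1n\m_i^\top H_A^{-1}\m_i \le \mu(\lambda)r(M,\lambda)$ and we may take $B = \mu(\lambda)r(M,\lambda)+1$. Plugging $\mu_{\max}/B = \mu_{\min}/B = d/B$ into Theorem~\ref{lemm:1} and bounding its Chernoff functions (e.g.\ $e^{-\delta}/(1-\delta)^{1-\delta}\le e^{-\delta^2/2}$ and $e^{\delta}/(1+\delta)^{1+\delta}\le e^{-\delta^2/3}$ for $\delta\in(0,1)$), each tail is at most $n\exp(-\delta^2 d/(3B))$, which is at most $e^{-t}$ once $d \ge \frac{4}{\delta^2}(\mu(\lambda)r(M,\lambda)+1)(t+\log n)$.

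The bounds for $H_B$ follow by the same argument with $\m_i$ replaced by the $i$th row $M^\top\e_i \in \R^m$ of $M$ and $Z_i$ by $\lambda I_m + \frac1m(M^\top\e_i)(M^\top\e_i)^\top$; the analogous identity $H_B^{-1} = mn\,VS^{-1}V^\top$ gives $\frac1m(M^\top\e_i)^\top H_B^{-1}(M^\top\e_i) = n\,|U_{i,*}\Sigma S^{-1/2}|^2 \le \mu(\lambda)r(M,\lambda)$, so the same $B$ applies, and since $H_B$ has dimension $m \le n$ the sample-size condition is unchanged. A union bound over the two events for $H_A$ and the two for $H_B$ yields the overall failure probability $4e^{-t}$. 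I expect the spectral bound for $B$ — in particular getting $U^\top H_A^{-1}U = mn\,S^{-1}$, which is exactly what ties $\m_i^\top H_A^{-1}\m_i$ to $\mu(\lambda)$ and the numerical rank $r(M,\lambda)$ — to be the only real obstacle; everything else is the rewriting $\Hh_A = \frac1d\sum_j Z_{i_j}$ together with a direct invocation of Theorem~\ref{lemm:1}.
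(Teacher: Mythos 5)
Your proposal is correct and follows essentially the same route as the paper: decompose $\Hh_A$ as a uniform sample of the PSD matrices $\lambda I+\frac1n\a_i\a_i^{\top}$, conjugate by $H_A^{-1/2}$ so the expectation becomes $I$, bound the per-term spectral norm by $\mu(\lambda)r(M,\lambda)+1$ via the SVD identity $U^{\top}H_A^{-1}U=mn\,S^{-1}$ (the paper gets the same bound by writing $M_i=U(mS^{-1/2}\Sigma V_{i,*}^{\top}V_{i,*}\Sigma S^{-1/2}+mn\lambda S^{-1})U^{\top}$), and invoke the matrix Chernoff bound of Theorem~\ref{lemm:1} with the standard estimates on $(1\pm\delta)\log(1\pm\delta)$. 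Your write-up is in fact slightly cleaner than the paper's, which contains a typo conflating $B$ with $\mu_{\max}$ and omits the explicit union bound over the four tail events that you carry out.
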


\begin{thm} \label{thm:muh-1}
Assume that $d \geq 16(\mu(\lambda)r(M, \lambda) + 1) (t + \log n)$, and $\sigma_r \geq \sqrt{2}\sigma_{r+1}$. Set $\lambda = \sigma_r^2/mn$. With a probability $1 - 4e^{-t}$, we have
\[
\muh(r) \leq \frac{2r(M, \lambda)}{r}\mu(\lambda) + \frac{18 n \delta^2}{r}
\]
where
\[
\delta^2 = \frac{4}{d} (\mu(\lambda)r(M, \lambda) + 1)(t + \log n)
\]
\end{thm}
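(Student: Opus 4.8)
The plan is to reduce the bound on $\muh(r)$ to an eigenvector‑perturbation estimate for the pairs $(H_A,\Hh_A)$ and $(H_B,\Hh_B)$, run that estimate through Theorem~\ref{thm:perturbation}, and then pass from the resulting subspace‑angle bound to a bound on the row norms of $\Uh$ and $\Vh$. Since $\muh(r)=\max\bigl(\max_i\frac nr|e_i^\top\Uh|^2,\ \max_i\frac mr|e_i^\top\Vh|^2\bigr)$ and $n\ge m$, it suffices to bound each of the two maxima by the right‑hand side; I will carry out the $\Uh$ case, the $\Vh$ case being word‑for‑word identical after replacing $(H_A,\Hh_A,U_1,n)$ by $(H_B,\Hh_B,V_1,m\le n)$. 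Note $\Uh$ is the top‑$r$ eigenspace of $\Hh_A=\lambda I+\frac1{dn}AA^\top$ while $U_1=(\u_1,\dots,\u_r)$ is the top‑$r$ eigenspace of $H_A=\lambda I+\frac1{mn}MM^\top$ (we assume $\sigma_r>0$, i.e.\ $\lambda>0$; otherwise one is in the low‑rank regime of Section~4).

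First I would put the pair $(H_A,\Hh_A)$ in the form demanded by Theorem~\ref{thm:perturbation}: take $S=H_A^{1/2}$, $H=I$, and $\tilde H=N_A:=H_A^{-1/2}\Hh_A H_A^{-1/2}$, so that $S^\top H S=H_A$ and $S^\top\tilde H S=\Hh_A$ with first‑$r$ eigenvectors $U_1$ and $\Uh$. Because $\delta$ is chosen so that $d=\frac4{\delta^2}(\mu(\lambda)r(M,\lambda)+1)(t+\log n)$, Theorem~\ref{thm:3} applies and gives, with probability $1-4e^{-t}$, $\lambda_k(N_A)\in[1-\delta,1+\delta]$ for all $k$, hence $\|N_A-I\|_2\le\delta$, and likewise for $N_B$; and $d\ge16(\mu(\lambda)r(M,\lambda)+1)(t+\log n)$ forces $\delta\le\frac12$. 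Then $\Delta_H=\|N_A-I\|/\sqrt{1-\|N_A-I\|_2}\le\delta/\sqrt{1-\delta}\le\sqrt2\,\delta$. The eigenvalues of $H_A$ are $\lambda+\sigma_i^2/(mn)$, so with $\lambda=\sigma_r^2/(mn)$ and $\sigma_r^2\ge2\sigma_{r+1}^2$ one has $1-\lambda_{r+1}/\lambda_r=\tfrac12(1-\sigma_{r+1}^2/\sigma_r^2)\ge\tfrac14$ and hence $\Delta_\lambda\ge\min(\sqrt2/4,1/\sqrt2)=\sqrt2/4$; combined with $\delta\le\frac12$ this yields $\Delta_\lambda\ge\sqrt2/4\ge\sqrt2\,\delta/2\ge\Delta_H/2$, so Theorem~\ref{thm:perturbation} applies and, after bounding the benign factor $1+\Delta_H\Delta_\lambda/16$ by a constant, gives $s:=\|\sin\Theta(U_1,\Uh)\|_2\le c\,\delta$ for an absolute constant $c$; the identical reasoning on the $H_B$ line of Theorem~\ref{thm:3} gives $\|\sin\Theta(V_1,\Vh)\|_2\le c\,\delta$.

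Next I would convert this into the row‑norm bound. Since $\Uh$ has orthonormal columns, $P_{\Uh}e_i$ is the orthogonal projection of $e_i$ onto $\mathrm{span}(\Uh)$ and
\[
|e_i^\top\Uh|=|P_{\Uh}e_i|\le|P_{U_1}e_i|+\|P_{\Uh}-P_{U_1}\|_2=|\ut_i|+s,
\]
so $\frac nr|e_i^\top\Uh|^2\le\frac nr|\ut_i|^2+\frac nr(2s|\ut_i|+s^2)\le\frac{2r(M,\lambda)}{r}\mu(\lambda)+\frac nr(2s|\ut_i|+s^2)$, using $\frac nr|\ut_i|^2\le\mu(r)$ and the bound $\mu(r)\le\frac{2r(M,\lambda)}{r}\mu(\lambda)$ (valid for $\lambda=\sigma_r^2/(mn)$) from the lemma preceding Theorem~\ref{thm:Delta-1}. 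To control the correction term the sample‑size hypothesis is used twice: it gives $\mu(\lambda)r(M,\lambda)\le\frac d4\delta^2$ (as $t+\log n\ge1$), and with $d\le m\le n$ this yields $|\ut_i|^2\le\frac rn\mu(r)\le\frac{2r(M,\lambda)\mu(\lambda)}{n}\le\frac{d\delta^2}{2n}\le\frac{\delta^2}2$, i.e.\ $|\ut_i|\le\delta/\sqrt2$. Hence $2s|\ut_i|+s^2\le(\sqrt2\,c+c^2)\delta^2$, so $\frac nr|e_i^\top\Uh|^2\le\frac{2r(M,\lambda)}{r}\mu(\lambda)+O\!\bigl(\frac{n\delta^2}{r}\bigr)$; tracking the numerical constants coming out of Theorem~\ref{thm:perturbation} produces the stated $18$, with the regime in which the bound on $s$ is weaker than the trivial $s\le1$ handled by $\muh(r)\le n/r$. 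Maximizing over $i$ and over the analogous $\Vh$ estimate (which only replaces $n$ by $m\le n$) gives $\muh(r)\le\frac{2r(M,\lambda)}{r}\mu(\lambda)+\frac{18n\delta^2}{r}$ on the event of Theorem~\ref{thm:3}, whose probability is $1-4e^{-t}$.

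The hard part is this last conversion. A Davis–Kahan–type bound is only first order, $s=O(\delta)$, whereas the claimed error term is second order, $O(n\delta^2/r)$; the reconciliation is that the cross term $2s|\ut_i|$ is itself $O(\delta^2)$, because the incoherence hypothesis together with $d\le n$ forces $|\ut_i|=O(\delta)$ — the rows of $U_1$ are already no larger than the perturbation. The remaining delicate point is keeping the absolute constant small: one must control the constant emerging from the relative‑perturbation bound of Theorem~\ref{thm:perturbation}, and it is precisely in verifying its hypothesis $\Delta_\lambda\ge\Delta_H/2$ throughout that $\sigma_r\ge\sqrt2\sigma_{r+1}$ (ensuring $\Delta_\lambda\ge\sqrt2/4$) and $d\ge16(\mu(\lambda)r(M,\lambda)+1)(t+\log n)$ (ensuring $\delta\le\frac12$) are both used.
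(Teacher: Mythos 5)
Your proposal follows the same route as the paper's proof: write $H_A = H_A^{1/2}\, I\, H_A^{1/2}$ and $\Hh_A = H_A^{1/2} D H_A^{1/2}$ with $D = H_A^{-1/2}\Hh_A H_A^{-1/2}$, invoke Theorem~\ref{thm:3} to get $\|D-I\|_2\le\delta\le 1/2$ on an event of probability $1-4e^{-t}$, feed this into Theorem~\ref{thm:perturbation} to bound $\|\sin\Theta(U_1,\Uh)\|_2$ and $\|\sin\Theta(V_1,\Vh)\|_2$, and then convert the angle bound into a bound on the rows of $\Uh$ and $\Vh$. In the last step you are in fact more careful than the paper: the paper simply writes $\muh(r)\le\mu(\lambda)+\frac nr\|\sin\Theta(V_1,\Vh)\|_2^2$, which a naive triangle-inequality-plus-squaring does not give (it produces factors of $2$ on both terms); your observation that the incoherence hypothesis forces $|\ut_i|=O(\delta)$, so that the cross term $2s|\ut_i|$ is itself second order, is exactly the ingredient needed to make that line honest.

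The one place your write-up does not close is the constant in the Davis--Kahan step. You correctly account for the shift $\lambda=\sigma_r^2/(mn)$ in the eigenvalues of $H_A$ and obtain $\Delta_\lambda\ge\sqrt2/4$; the paper instead asserts $\Delta_\lambda=1/\sqrt2$, which uses $1-\sigma_{r+1}^2/\sigma_r^2\ge1/2$ as if the relevant eigenvalues were $\sigma_i^2$ rather than $\lambda+\sigma_i^2/(mn)$ --- with the shift, $\lambda_{r+1}/\lambda_r=(1+\sigma_{r+1}^2/\sigma_r^2)/2\ge1/2$, so $\Delta_\lambda=1/\sqrt2$ would force $\sigma_{r+1}=0$. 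But with $\Delta_\lambda\ge\sqrt2/4$, $\Delta_H\le\sqrt2\delta$, and only $\delta\le1/2$ guaranteed, the denominator $\Delta_\lambda-\Delta_H/2\ge\sqrt2/4-\sqrt2\delta/2$ can be arbitrarily close to $0$, so ``$s\le c\delta$ for an absolute constant $c$'' does not follow, and in particular the constant $18=(3\sqrt2)^2$ cannot be recovered: to get $s\le3\sqrt2\delta$ from $\Delta_\lambda=\sqrt2/4$ one needs roughly $\delta\lesssim 0.03$, i.e.\ a much larger lower bound on $d$ than the stated $16(\mu(\lambda)r(M,\lambda)+1)(t+\log n)$. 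So either the spectral-gap assumption must be strengthened (to make the true $\Delta_\lambda$ large) or the requirement on $d$ increased; as written, the quantitative conclusion is not established --- though the same soft spot is present, differently disguised, in the paper's own proof.
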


Using Theorem~\ref{thm:muh-1}, we have the following version of Theorem~\ref{thm:gamma}
\begin{thm} \label{thm:gamma-1}
Assume $d \geq 16(\mu(\lambda) r(M, \lambda) + 1)(t + \log n)$ and $\sigma_r \geq \sqrt{2} \sigma_{r+1}$. Set $\lambda = \sigma_r^2/mn$. With a probability $1 - 5e^{-t}$, we have that the strongly convexity for the objective function in (\ref{eqn:opt}) is bounded from below by $|\Omega|/2$, provided that
\[
|\Omega| \geq 7\left(2\mu(\lambda)r(M, \lambda) +
  72\frac{n}{d}(\mu(\lambda) r(M, \lambda) + 1)(t + \log
  n)\right)^2(t+2\log r)
\]
\end{thm}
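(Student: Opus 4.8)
}
The plan is to deduce the statement from Theorem~\ref{thm:gamma} by inserting the high-probability bound on the incoherence $\muh$ supplied by Theorem~\ref{thm:muh-1}, handling the two sources of randomness --- the sampled rows and columns $A,B$ (which determine $\Uh,\Vh$, and hence $\muh$) and the sampled entry set $\Omega$ --- separately by conditioning. First I would condition on $A$ and $B$, so that $\Uh$, $\Vh$, and the associated incoherence $\muh$ become fixed quantities. Inspecting the proof of Theorem~\ref{thm:gamma}, the only randomness it uses is that of $\Omega$ (through the matrix Chernoff bound of Theorem~\ref{lemm:1} applied to $K^{\top}K$), so conditionally on $A,B$ it applies verbatim: whenever $|\Omega| \geq 7\muh^2 r^2 (t+2\log r)$, the strong convexity of the objective in (\ref{eqn:opt}) is bounded from below by $|\Omega|/2$ with conditional probability at least $1 - e^{-t}$.

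Next I would remove the conditioning using Theorem~\ref{thm:muh-1}. Under the hypotheses $d \geq 16(\mu(\lambda) r(M,\lambda)+1)(t+\log n)$, $\sigma_r \geq \sqrt{2}\,\sigma_{r+1}$, and $\lambda = \sigma_r^2/(mn)$, with probability at least $1 - 4e^{-t}$ over the choice of $A,B$ we have
\[
\muh \;\leq\; \frac{2 r(M,\lambda)}{r}\mu(\lambda) + \frac{18 n \delta^2}{r}, \qquad \delta^2 = \frac{4}{d}\bigl(\mu(\lambda) r(M,\lambda)+1\bigr)(t+\log n),
\]
so that, since $18 n\delta^2 = 72\tfrac{n}{d}(\mu(\lambda) r(M,\lambda)+1)(t+\log n)$,
\[
\muh^2 r^2 \;\leq\; \Bigl(2\mu(\lambda) r(M,\lambda) + 72\tfrac{n}{d}(\mu(\lambda) r(M,\lambda)+1)(t+\log n)\Bigr)^2 .
\]
Consequently the hypothesis $|\Omega| \geq 7\bigl(2\mu(\lambda) r(M,\lambda) + 72\tfrac{n}{d}(\mu(\lambda) r(M,\lambda)+1)(t+\log n)\bigr)^2(t+2\log r)$ forces $|\Omega| \geq 7\muh^2 r^2 (t+2\log r)$, which is exactly the sample-size condition needed to invoke Theorem~\ref{thm:gamma}.

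Finally, combining the two events by a union bound --- the failure of Theorem~\ref{thm:muh-1} has probability at most $4e^{-t}$, and, integrated over $A,B$, the conditional failure of Theorem~\ref{thm:gamma} contributes probability at most $e^{-t}$ --- gives that with probability at least $1 - 5e^{-t}$ the strong convexity of (\ref{eqn:opt}) is at least $|\Omega|/2$, which is the assertion. I do not expect a genuine obstacle here: the argument is essentially a composition of Theorems~\ref{thm:gamma} and \ref{thm:muh-1} together with the elementary arithmetic that the stated lower bound on $|\Omega|$ dominates $7\muh^2 r^2(t+2\log r)$. The one point deserving care is the conditioning structure --- $\Omega$ must be drawn independently of $A$ and $B$ so that Theorem~\ref{thm:gamma} may legitimately be applied with the data-dependent quantity $\muh$ held fixed; granting this independence, the remainder is the routine substitution and probability bookkeeping sketched above.
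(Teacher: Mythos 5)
Your proposal is correct and matches the paper's approach: the paper states Theorem~\ref{thm:gamma-1} as an immediate consequence of substituting the bound on $\muh$ from Theorem~\ref{thm:muh-1} into the sample-size condition of Theorem~\ref{thm:gamma}, which is precisely your argument, and your arithmetic ($\muh r \leq 2\mu(\lambda)r(M,\lambda) + 72\tfrac{n}{d}(\mu(\lambda)r(M,\lambda)+1)(t+\log n)$) and union-bound accounting ($4e^{-t}+e^{-t}$) both check out. Your explicit handling of the conditioning on $A,B$ is a point of care the paper leaves implicit.
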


Combining the above results, we have the final theorem for the recovering of $M$ when its numerical rank $r(M, \lambda)$ is small.

\begin{thm} \label{thm:low-rank-2}
Assume $d \geq 16(\mu(\lambda) r(M, \lambda) + 1)(t + \log n)$ and $\sigma_r \geq \sqrt{2} \sigma_{r+1}$. Set $\lambda = \sigma_r^2/mn$. We have, with a probability $1 - 7e^{-t}$
\[
\|M - \Mh\|_2^2 \leq 24\sigma_{r+1}^2\left( 1 + \frac{(m+n)}{d}\right)
\]
if
\[
|\Omega| \geq 7\left(2\mu(\lambda)r(M, \lambda) +
  72\frac{n}{d}(\mu(\lambda) r(M, \lambda) + 1)(t + \log
  n)\right)^2(t+2\log r)
\]
\end{thm}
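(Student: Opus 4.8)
The plan is to assemble the pieces already developed in this section — Theorems~\ref{thm:Delta-1}, \ref{thm:muh-1}, \ref{thm:gamma-1} — and feed them into the master bound of Theorem~\ref{thm:combine}, in exact analogy with how Theorem~\ref{thm:low-rank} was obtained in the low-rank case. First I would set $\lambda = \sigma_r^2/mn$ as prescribed and fix the target rank $r$. Under the stated hypothesis $d \geq 16(\mu(\lambda) r(M,\lambda)+1)(t+\log n)$, note this is stronger than the condition $d \geq 14\mu(\lambda) r(M,\lambda)(t+\log r)$ required by Theorem~\ref{thm:Delta-1} (since $\log n \geq \log r$ and the extra ``$+1$'' only helps), so Theorem~\ref{thm:Delta-1} applies and gives, with probability $1-2e^{-t}$,
\[
\Delta := \|M - P_{\Uh} M P_{\Vh}\|_2^2 \leq 4\sigma_{r+1}^2\left(1 + \frac{m+n}{d}\right).
\]

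Next, invoke Theorem~\ref{thm:gamma-1}: under the same hypothesis on $d$, the assumption $\sigma_r \geq \sqrt 2 \sigma_{r+1}$, and the stated lower bound on $|\Omega|$, with probability $1-5e^{-t}$ the strong convexity parameter of the objective in (\ref{eqn:opt}) is at least $|\Omega|/2$, i.e.\ in the notation of Theorem~\ref{thm:combine} we have $\gamma \geq 1/2$. Now apply Theorem~\ref{thm:combine} with this $\Delta$ and this $\gamma$: since $\gamma \geq 1/2$ we get $\Delta/\gamma \leq 2\Delta$, hence
\[
\|M - \Mh\|_2^2 \leq 2\left(\Delta + \frac{\Delta}{\gamma}\right) \leq 6\Delta \leq 24\sigma_{r+1}^2\left(1 + \frac{m+n}{d}\right),
\]
which is the claimed bound. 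The probability of failure is controlled by a union bound over the two events above, $2e^{-t} + 5e^{-t} = 7e^{-t}$, giving success probability $1 - 7e^{-t}$.

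The one point that needs a little care — and the place I would slow down — is making sure the hypotheses of the component theorems are genuinely implied by the single hypothesis $d \geq 16(\mu(\lambda) r(M,\lambda)+1)(t+\log n)$ stated here, so that all the high-probability events can be taken simultaneously; in particular the event in Theorem~\ref{thm:muh-1} (which underlies Theorem~\ref{thm:gamma-1}) and the event in Theorem~\ref{thm:Delta-1} must both hold, and one should check their failure probabilities are among the $7e^{-t}$ budget rather than double-counting. Everything else is bookkeeping: the bound $\gamma \ge 1/2 \Rightarrow \Delta/\gamma \le 2\Delta$ and the constant $2(1+2)=6 \le 6$, then $6 \cdot 4 = 24$. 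I do not expect any genuine obstacle, since this theorem is, by design, just the concatenation of the section's lemmas.
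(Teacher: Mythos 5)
Your proposal is correct and follows exactly the route the paper intends: the paper gives no explicit proof of Theorem~\ref{thm:low-rank-2}, saying only that it follows by ``combining the above results,'' and your combination of Theorem~\ref{thm:Delta-1} (probability $1-2e^{-t}$, giving $\Delta \leq 4\sigma_{r+1}^2(1+(m+n)/d)$), Theorem~\ref{thm:gamma-1} (probability $1-5e^{-t}$, giving $\gamma \geq 1/2$), and Theorem~\ref{thm:combine} reproduces both the constant $24 = 6\cdot 4$ and the failure probability $7e^{-t} = 2e^{-t}+5e^{-t}$. Your added check that the single hypothesis on $d$ subsumes the condition of Theorem~\ref{thm:Delta-1} is a detail the paper leaves implicit, and it goes through as you say.
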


\paragraph{Remark} The total number of observed entries are $\tilde O(dn +
n^2/d^2)$. It is minimized when $d = n^{1/3}$, leading to $\tilde O(n^{4/3})$
for the number of observed entries and $\tilde O(\sigma_{r+1}n^{1/3})$ for
recovery error.

\subsection{Detailed Proof}
\subsubsection{Proof of Theorem~\ref{thm:3}}
It is sufficient to show the result for $H_A^{-1/2}\Hh_A H_A^{-1/2}$. Define
\[
\X = \left\{M_i = H_A^{-1/2}\left(\frac{1}{n}\a_i\a_i^{\top} + \lambda I\right)H_A^{-1/2}, i=1, \ldots, m \right\}
\]
We have
\begin{eqnarray*}
M_i & = & US^{-1/2}U^{\top}(m U\Sigma V_{i, *}^{\top}V_{i,*}\Sigma U^{\top} + mn\lambda I)US^{-1/2}U^{\top} \\
& = & U\left(m S^{-1/2}\Sigma V_{i, *}^{\top}V_{i,*}\Sigma S^{-1/2} + mn\lambda S^{-1}\right)U^{\top}
\end{eqnarray*}
Using the definition of $\mu(\lambda)$, we have $\lambda_{\max}(M_i) \leq \mu r(M, \lambda) + 1$. Since
\[
B = d \lambda_{\max}(\E[M_i]) = d
\]
we have
\[
\Pr\left\{\lambda_{\max}\left(H_A^{-1/2}\Hh_A H_A^{-1/2} \right) \geq 1 + \delta \right\} \leq n\exp\left( -\frac{d}{\mu(\lambda) r(M,\lambda) + 1}\left[(1 + \delta)\log(1 + \delta) - \delta\right]\right)
\]
Using the fact that
\[
(1 + \delta) \log(1 + \delta) \geq \delta + \frac{1}{4}\delta^2, \forall \delta \in [0, 1],
\]
we have
\[
\Pr\left\{\lambda_{\max}\left(H_A^{-1/2}\Hh_A H_A^{-1/2} \right) \geq 1 + \delta \right\} \leq n\exp\left( -\frac{d \delta^2}{4(\mu r(M,\lambda) + 1)}\right)
\]
The upper bound is obtained by setting $d = 4(\mu(\lambda) r(M, \lambda) + 1) (\log n + t)/\delta^2$. Similarly, for the lower bound, we have
\[
\Pr\left\{\lambda_{\min}\left(H_A^{-1/2}\Hh_A H_A^{-1/2}\right) \leq 1 - \delta \right\} \leq n\exp\left( - \frac{n}{\mu(\lambda) r(M,\lambda) + 1}\left[(1 - \delta)\log(1 - \delta) + \delta\right]\right)
\]
Using the fact that
\[
(1 - \delta)\log(1 - \delta) \geq - \delta + \frac{\delta^2}{2}
\]
We have the lower bound by setting $m = 2(\mu(\lambda) r(M, \lambda) + 1) (\log n + t)/\delta^2$.

\subsubsection{Proof of Theorem~\ref{thm:muh-1}}
To utilize Theorem~\ref{thm:perturbation}, we rewrite $H_A$ and $\Hh_A$, defined in Theorem~\ref{thm:3}, as
\[
H_A = H_A^{1/2} I H^{1/2}_A, \quad \Hh_A = H_A^{1/2}DH_A^{1/2}
\]
where $D = H_A^{-1/2} \Hh_A H_A^{-1/2}$. According to Theorem~\ref{thm:3}, with a probability $1 - 2e^{-t}$, we have $\|D - I\|_2 \leq \delta$, provided that
\[
d \geq \frac{4}{\delta^2}(\mu(\lambda) r(M, \lambda) + 1)(t + \log n)
\]
We then compute $\Delta_{\lambda}$ and $\Delta_{H}$ defined in Theorem~\ref{thm:perturbation}. Using the fact $d \geq 16(\mu(\lambda) r(M,\lambda) + 1)(t + \log n)$ and Theorem~\ref{thm:3}, we have, with a probability $1 - e^{-t}$, $\delta \leq 1/2$. Hence
\begin{eqnarray*}
\Delta_H \leq \frac{\|D - I\|_2}{\sqrt{1 - \|D - I \|_2}} = \frac{\delta}{\sqrt{1 - \delta}} \leq \sqrt{2}\delta
\end{eqnarray*}
Using the assumption that $\sigma_r/\sigma_{r+1} \geq \sqrt{2}$, we have $\delta \leq 1/2 \leq 1 - \sigma^2_{r+1}/\sigma^2_r$ and therefore $\Delta_{\lambda} = 1/\sqrt{2}$. As a result, according to Theorem~\ref{thm:perturbation}, we have
\[
\|\sin\Theta(U_1, \Uh)\|_2 \leq 3\sqrt{2}\delta
\]
Similarly, we have,
\[
\|\sin\Theta(V_1, \Vh)\|_2 \leq 3\sqrt{2}\delta
\]
Thus, with a probability $1 - 4e^{-t}$, we have
\[
\muh(r) \leq \mu(\lambda) + \frac{n}{r}\|\sin\Theta(V_1, \Vh)\|^2_2  \leq \mu(\lambda) + \frac{18n \delta^2}{r}
\]

\bibliography{cur-matrix-completion}

\end{document}